\documentclass[sigconf]{acmart}
\AtBeginDocument{%
  }

\setcopyright{acmlicensed}

\copyrightyear{2026}
\acmYear{2026}
\setcopyright{cc}
\setcctype{by}
\acmConference[WWW '26]{Proceedings of the ACM Web Conference 2026}{April 13--17, 2026}{Dubai, United Arab Emirates}
\acmBooktitle{Proceedings of the ACM Web Conference 2026 (WWW '26), April 13--17, 2026, Dubai, United Arab Emirates}
\acmPrice{}
\acmDOI{10.1145/3774904.3792626}
\acmISBN{979-8-4007-2307-0/2026/04}

\usepackage{amsmath}
\usepackage{booktabs}        % \toprule, \midrule, \bottomrule
\newcommand{\best}[1]{\cellcolor{gray!30}\textbf{#1}}
\newcommand{\second}[1]{\cellcolor{gray!18}\textbf{#1}}
\newcommand{\third}[1]{\cellcolor{gray!10}\textbf{#1}}
\usepackage[table]{xcolor}   % 颜色，支持 \colorbox{}、灰度 gray!<num>
\usepackage{colortbl}        % \cellcolor 用在表格单元格

\usepackage{subcaption}      % 提供 \subcaption*（内部会自动加载 caption 包）

\usepackage{graphicx}        % \resizebox、\includegraphics

\usepackage{hyperref}        % \autoref、链接

\usepackage{amsthm}

\newtheorem{remark}{Remark}

\usepackage{tikz}
\usetikzlibrary{arrows.meta,positioning,fit,shapes,shapes.multipart,calc,backgrounds}

\usepackage[noend]{algpseudocode} % 无结束标记的算法包
\usepackage{algorithm}            % 算法浮动环境
\begin{document}

%%
%% The "title" command has an optional parameter,
%% allowing the author to define a "short title" to be used in page headers.
\title{Prototype-Aligned Federated Soft-Prompts for Continual Web Personalization}

%%
%% The "author" command and its associated commands are used to define
%% the authors and their affiliations.
%% Of note is the shared affiliation of the first two authors, and the
%% "authornote" and "authornotemark" commands
%% used to denote shared contribution to the research.

\author{Canran Xiao}
\email{xiaocr3@mail.sysu.edu.cn} % TODO: 请填写通讯作者邮箱
\orcid{https://orcid.org/0009-0001-9730-5454} %
\affiliation{%
  \institution{Shenzhen Campus of Sun Yat-sen University}
  \city{Shenzhen}
  \state{Guangdong}
  \country{China}
}

\author{Liwei Hou}
\authornote{Corresponding author.}
\email{houliwei@hnu.edu.cn} % 
\orcid{https://orcid.org/0000-0003-3476-2452} % TODO: 请填写Zhiming Lin的ORCID ID
\affiliation{%
  \institution{Hunan University}
  \city{Changsha} % TODO: 请填写城市
  \state{Hunan}
  \country{China}
}

\renewcommand{\shortauthors}{Canran Xiao and Liwei Hou}
%% No italics, no superscripts, not anonymous
%% Use footnote or author note to identify equal contribution and/or contact author info

%%
%% The abstract is a short summary of the work to be presented in the
%% article.
\begin{abstract}
Continual Web personalization is essential for engagement, yet real-world non-stationarity and privacy constraints make it hard to adapt quickly without forgetting long-term preferences. We target this gap by seeking a privacy-conscious, parameter-efficient interface that controls stability--plasticity at the user/session level while tying user memory to a shared semantic prior. We propose ProtoFed-SP, a prompt-based framework that injects dual-timescale soft prompts into a frozen backbone: a fast, sparse short-term prompt tracks session intent, while a slow long-term prompt is anchored to a small server-side prototype library that is continually refreshed via differentially private federated aggregation. Queries are routed to Top-$M$ prototypes to compose a personalized prompt. Across eight benchmarks, ProtoFed-SP improves NDCG@10 by +2.9\% and HR@10 by +2.0\% over the strongest baselines, with notable gains on Amazon-Books (+5.0\% NDCG vs.\ INFER), H\&M (+2.5\% vs.\ Dual-LoRA), and Taobao (+2.2\% vs.\ FedRAP). It also lowers forgetting (AF) and Steps-to-95\% and preserves accuracy under practical DP budgets. Our contribution is a unifying, privacy-aware prompting interface with prototype anchoring that delivers robust continual personalization and offers a transparent, controllable mechanism to balance stability and plasticity in deployment.
\end{abstract}

%%
%% The code below is generated by the tool at http://dl.acm.org/ccs.cfm.
%% Please copy and paste the code instead of the example below.
%%
\begin{CCSXML}
<ccs2012>
   <concept>
       <concept_id>10010147.10010919.10010172</concept_id>
       <concept_desc>Computing methodologies~Distributed algorithms</concept_desc>
       <concept_significance>500</concept_significance>
       </concept>
   <concept>
       <concept_id>10010147.10010178</concept_id>
       <concept_desc>Computing methodologies~Artificial intelligence</concept_desc>
       <concept_significance>500</concept_significance>
       </concept>
 </ccs2012>
\end{CCSXML}

\ccsdesc[500]{Computing methodologies~Distributed algorithms}
\ccsdesc[500]{Computing methodologies~Artificial intelligence}

%%
%% Keywords. The author(s) should pick words that accurately describe
%% the work being presented. Separate the keywords with commas.
\keywords{Continual personalization, federated recommendation, parameter-efficient prompting, differential privacy, stability--plasticity trade-off}
%% A "teaser" image appears between the author and affiliation
%% information and the body of the document, and typically spans the
%% page.

\iffalse

\begin{teaserfigure}
  \includegraphics[width=\textwidth]{sampleteaser}
  \caption{Seattle Mariners at Spring Training, 2010.}
  \Description{Enjoying the baseball game from the third-base
  seats. Ichiro Suzuki preparing to bat.}
  \label{fig:teaser}
\end{teaserfigure}

\fi

%\received{20 February 2007}
%\received[revised]{12 March 2009}
%\received[accepted]{5 June 2009}

%%
%% This command processes the author and affiliation and title
%% information and builds the first part of the formatted document.
\maketitle
\section{Introduction}
Personalized Web experiences---from recommendations\cite{liang2006personalized,rana2024user,li2025multi} to on-site search and feeds\cite{cai2025large,zhou2024cognitive}---operate in non-stationary environments where user interests, item supply, and platform rules evolve continually. This non-stationarity makes accurate, timely adaptation indispensable for engagement and safety at scale, yet exposes a classic stability--plasticity dilemma: models must absorb fresh signals quickly without erasing long-term preferences. At the same time, modern deployments increasingly require on-device adaptation and privacy-preserving training, challenging conventional full retraining pipelines\cite{wu2021effective,zeng2025lensllmunveilingfinetuningdynamics,lin2025planbudgeteffectiveefficient,yao2023ndc}.

Recent advancements in continuous recommendation cover self-correction, regularization, federated personalization, and parameter-efficient prompting—showing that models are capable of adapting under non-stationary conditions~\citep{Zhang2024INFER,Wang2023SAILPIW,lee2024continual,Li2020FedProx,Li2021Ditto,xiao2024confusion,Li2021PrefixTuning,Lester2021PromptTuning,Hu2022LoRA,Bao2025RecICL,Yang2025PCL}. 
Yet across these lines, three high-level limitations remain. 
First, adaptation is typically governed at the global or loss level, with no portable, shareable semantic anchor that tethers per-user memory to population structure—leaving stability–plasticity largely uncontrolled at the user granularity~\citep{li2023ultrare,tao2023dudb}. 
Second, practical deployments must respect privacy and bandwidth: replay and interaction-derived signals can be costly or sensitive, while federated methods often personalize via extra weights without a lightweight, prompt-level memory and explicit DP accounting~\citep{McMahan2017FedAvg,Li2020FedProx,Li2021Ditto,wang2024computing}. 
Third, although PEFT/LLM approaches enable low-overhead adaptation, they rarely couple fast session plasticity with a refreshable global prior that is updated in a privacy-aware, federated manner~\citep{Li2021PrefixTuning,Lester2021PromptTuning,Hu2022LoRA,Bao2025RecICL,Yang2025PCL,zhang2024cf,zhang2025enhancing}. 
In short, the field lacks a \textbf{privacy-conscious, parameter-efficient} personalization interface that (i) provides per-user control of stability–plasticity and (ii) binds user memory to a shared semantic prior that can be continually refreshed under federated constraints (with fairness monitored over time~\citep{Yoo2024FADE,li2024towards,wang2025medical}).

This paper addresses that gap by studying a \emph{prompt-based continual personalization interface} that unifies per-user, dual-timescale adaptation with a small population semantic prior maintained in federated form under differential privacy. We give the answer to the following question: Can we achieve fast session-level plasticity and long-term stability per user, while privately refreshing a global anchor that resists drift?

Our contributions can be summarized as follows:

\textbf{(1) Insight.} We articulate and operationalize a stability--plasticity control interface for continual personalization: fast session updates are decoupled from slow preference memory, and slow memory is anchored to a compact, federated semantic prior---yielding a concrete mechanism to reason about drift, forgetting, and adaptation across users. 

\textbf{(2) Performance.} Across eight benchmarks, the proposed interface attains consistent gains over strong SoTA, while preserving utility under practical DP budgets. 
 
\textbf{(3) Capability.} The design is \emph{parameter- and communication-efficient}, supports federated and multi-modal settings, and improves cold-start and high-drift segments—demonstrating that privacy-conscious, anchored prompting is a viable path toward robust continual Web personalization.

\section{Related Work}
\subsection{Continual adaptation in recommender systems}
Early industrial CL for RS emphasized self-correction and error re-use. ReLoop~\citep{Cai2022ReLoop} encourages each model version to reduce the predecessor’s errors; ReLoop2~\citep{Zhu2023ReLoop2} extends this line by coupling slow (parametric) and fast (non-parametric error memory) learners for responsive adaptation. Replay has evolved from reservoir or frequency heuristics to influence-based selection (INFER) that prioritizes exemplars with higher training impact~\citep{Zhang2024INFER}. On the regularization side, SAIL-PIW learns user-wise imitation weights to balance recency vs.\ history, hinting at per-user stability–plasticity~\citep{Wang2023SAILPIW}; CCD co-evolves teacher–student via continual distillation~\citep{lee2024continual}. Despite their progress, these methods typically (a) treat stability–plasticity at the algorithm or global level rather than at the prompt/memory level for each user; (b) rely on storing interactions or logits (privacy/cost); and (c) lack an explicit, shareable anchor that tethers long-term user memory to population semantics. Our approach differs by (1) representing user memory as \emph{dual-timescale soft prompts}, (2) aligning the slow (long-term) prompt to a federated prototype library to curb drift, and (3) avoiding raw-data replay through DP aggregation of compressed prompt embeddings.

\subsection{Federated and privacy-preserving personalization}
Foundational FL work like FedProx established communication-efficient aggregation and robustness to statistical heterogeneity~\citep{McMahan2017FedAvg,Li2020FedProx}. Personalized FL (PFL) then introduced per-client objectives (e.g., Ditto) to reconcile accuracy, robustness and fairness~\citep{Li2021Ditto,chen2024integration}. Taobao’s CTNet formalized continual transfer between time-evolving domains, showing the value of preserving historical parameters under dynamics~\citep{Liu2023CTNet}. However, most PFL/CTL methods do not expose a prompt-level memory for fast personalization, rarely provide DP guarantees on uploaded signals, and do not unify per-user stability–plasticity control with a server-side semantic prior~\citep{jiang2025transforming}. We bridge these gaps by (i) learning user prompts (fast \& slow) on device, (ii) updating population prototypes via DP-aware aggregation, and (iii) routing queries to a small set of prototypes for parameter-efficient, privacy-conscious adaptation.

\subsection{Parameter-efficient prompting and LLM-based recommendation}
PEFT replaces full fine-tuning with small trainable modules: prefix/prompt tuning learn continuous prompts for frozen LMs~\citep{Li2021PrefixTuning,Lester2021PromptTuning,chen2024post}; LoRA injects low-rank adapters~\citep{Hu2022LoRA}. Recent LLM4Rec studies tailor PEFT/ICL to dynamics: RecICL adapts to recency purely via in-context exemplars without weight updates~\citep{Bao2025RecICL}; PCL treats prompts as external memory to mitigate forgetting in continual user modeling~\citep{Yang2025PCL}. While compelling, these approaches either lack a shareable semantic prior (each user/task learns its own prompt/adapters), or trade off fast session-level adaptation against long-term stability without an explicit anchor or federated privacy~\citep{yao2024swift}. Our method complements PEFT/LLM4Rec by tying user prompts to a prototype manifold: a small, DP-refreshed library that stabilizes long-term memory, while short-term prompts provide rapid session-level plasticity. This yields a concrete mechanism for per-user stability–plasticity control that coexists with FL constraints.

\section{Method: Prototype-Aligned Federated Soft-Prompts (ProtoFed-SP)}
\label{sec:method}

ProtoFed-SP treats soft prompts as user-specific, parameter-efficient memories and prototypes as server-side, federated anchors of stable population knowledge. Each client $u$ maintains a dual-timescale prompt state $(p^{\text{long}}_u,\,p^{\text{short}}_{u,t})$ for slow/fast preference dynamics and queries a global prototype library $\mathcal{C}=\{c_k\}_{k=1}^K$ to compose a routed prompt $\tilde p_{u,t}$ for inference. The long-term prompt is explicitly aligned to prototypes via an alignment loss, while the prototype library is federatedly updated by differentially private (DP) aggregation of compressed client-side prompt embeddings. The overall objective balances recommendation accuracy, alignment (stability), and sparsity (plasticity), as shown in Fig.~\ref{fig:framework}.

\begin{figure*}[htb]
	\centering
	\includegraphics[width=\linewidth]{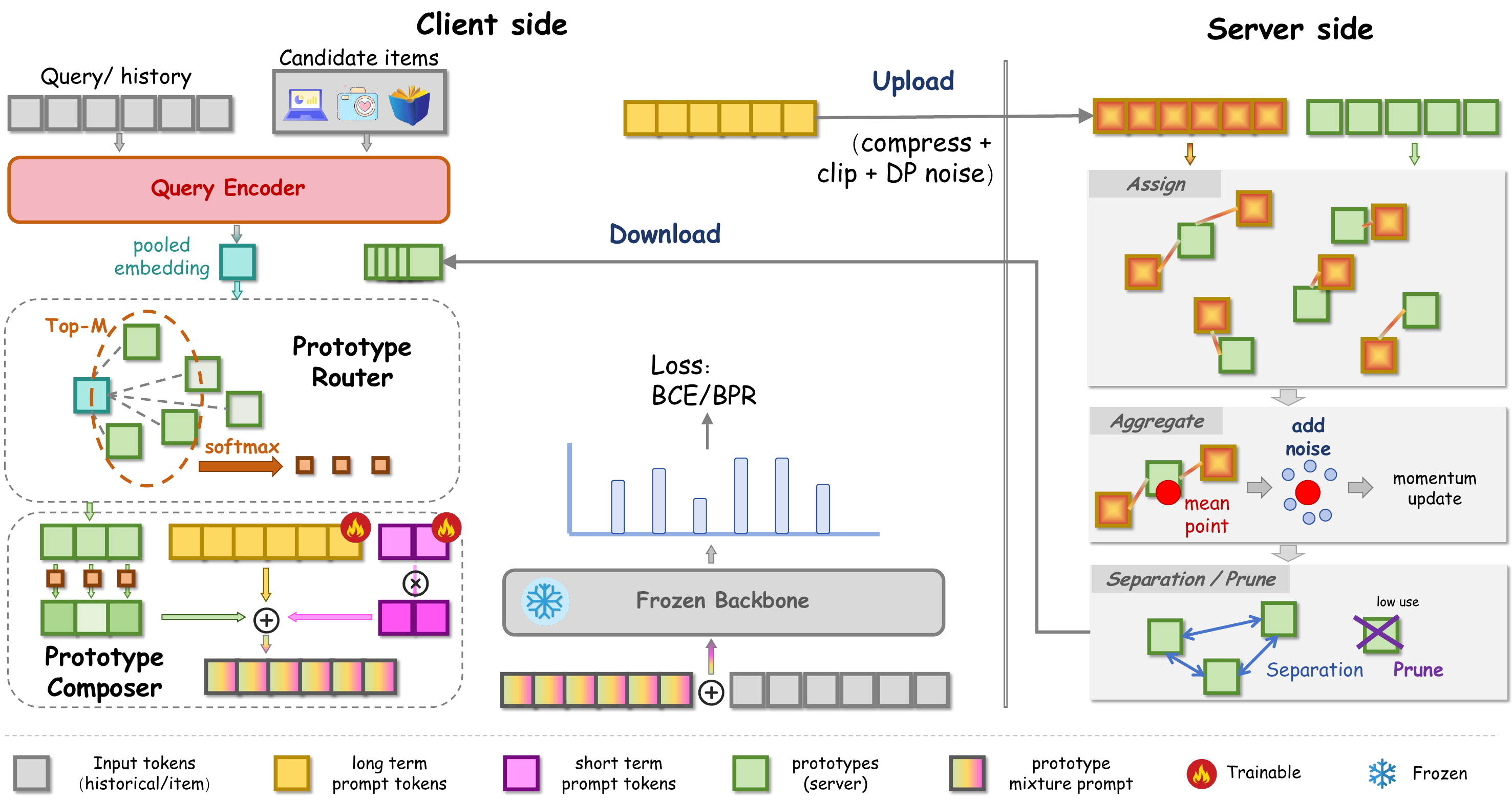}
	\caption{
\textbf{ProtoFed-SP: Prototype-Aligned Federated Soft-Prompt Personalization.}
\emph{Left (Client side).}
Given a user query or interaction history, the client encodes the context into a pooled embedding and routes it to a small set of Top-$M$ population prototypes via similarity-based retrieval.
A dual-timescale prompt state is maintained locally: a \textbf{long-term prompt} capturing stable preferences and a \textbf{short-term prompt} modeling session-level intent.
The routed prompt is composed by summing the long-term prompt, a drift-adaptive short-term prompt, and a weighted mixture of retrieved prototypes, and is injected into a frozen backbone for ranking.
Gradients from the recommendation loss (BCE/BPR) update only the prompt parameters, with sparse proximal updates for the short-term prompt and prototype-aligned updates for the long-term prompt.
\emph{Right (Server side).}
Clients periodically upload compressed long-term prompt embeddings protected by clipping and Gaussian noise.
The server performs differentially private aggregation (e.g., DP-FedKMeans or robust alternatives) to update a compact prototype library, with momentum updates, prototype separation, and pruning to prevent collapse.
The refreshed prototypes are broadcast back to clients, enabling privacy-conscious, parameter-efficient continual personalization with explicit stability--plasticity control.
}
	\label{fig:framework}
\end{figure*}

\subsection{Notation and Setting}
Let $\mathcal{U}$ be the user set. At time $t\in\mathbb{N}$, user $u\in\mathcal{U}$ issues a request with query/session context $q_{u,t}$ and local interaction buffer $\mathcal{D}_{u,t}$. A frozen backbone recommender $f_\theta$ (sequence/ranking model with parameters $\theta$) consumes item features $x_{u,t}$, while a \emph{soft prompt} $\tilde p_{u,t}\in\mathbb{R}^{L_p\times d}$ is injected at the embedding layer. Each client keeps a \emph{long-term} prompt $p^{\text{long}}_u\in\mathbb{R}^{L_p\times d}$ and a \emph{short-term} prompt $p^{\text{short}}_{u,t}\in\mathbb{R}^{L_p\times d}$. The server maintains a prototype library $\mathcal{C}=\{c_k\}_{k=1}^K$, $c_k\in\mathbb{R}^{L_p\times d}$ (or an equivalent latent representation). We denote by $\phi:\mathbb{R}^{L_p\times d}\to\mathbb{R}^{d_\phi}$ a prompt encoder used for alignment/aggregation (e.g., average pooling followed by an MLP), and by $g:\mathcal{Q}\to\mathbb{R}^{d_\phi}$ a query encoder for routing.

\subsection{Client: Dual-Timescale Personalized Prompting}
Users exhibit both slowly evolving preferences (genres, brands) and rapidly changing intents (session-level topics). A single prompt state either overfits to recent noise (plasticity dominates) or forgets old habits (stability dominates). We separate the temporal roles.

The composed prompt used for inference is
\begin{equation}
\tilde p_{u,t}
\;=\;
p^{\text{long}}_u
\;+\;
\alpha_{u,t}\,p^{\text{short}}_{u,t}
\;+\;
\sum_{k\in \text{Top-}M(q_{u,t})}
w_k(q_{u,t})\,c_k,
\label{eq:compose}
\end{equation}
where $\alpha_{u,t}\in\mathbb{R}_{\ge 0}$ scales short-term influence, $w_k(q_{u,t})\ge 0$ are routing weights with $\sum_k w_k=1$, and $c_k$ are prototypes. The \emph{prompt injection} concatenates $\tilde p_{u,t}$ with token embeddings: if $E(\cdot)$ maps tokens to $\mathbb{R}^d$, we form $\mathrm{Inject}(E(x_{u,t}),\tilde p_{u,t})=[\tilde p_{u,t};\,E(x_{u,t})]$ and feed into $f_\theta$.

\paragraph{Short-term update (fast, sparse).}
To avoid prompt sprawl, we promote sparsity in $p^{\text{short}}_{u,t}$ via an $\ell_1$ penalty and update it with a proximal step:
\begin{align}
\!\!\!\!p^{\text{short}}_{u,t}
\leftarrow
\mathrm{SoftThresh}
\Big(
p^{\text{short}}_{u,t}
-\eta_s\,\nabla_{p^{\text{short}}_{u,t}}\mathcal{L}_{\text{rec}}(u,t)
,\;\eta_s\lambda_p
\Big),
\label{eq:short_update}
\end{align}
where $\eta_s>0$ is the step size, $\lambda_p>0$ controls sparsity, and $\mathrm{SoftThresh}(z,\tau)=\mathrm{sign}(z)\cdot\max\{|z|-\tau,0\}$ is applied element-wise. Here $\mathcal{L}_{\text{rec}}$ is the recommendation loss (defined in \S\ref{subsec:loss}).

\paragraph{Long-term update (slow, aligned).}
Long-term prompts incorporate an alignment proximal to prototypes (\S\ref{subsec:align}):
\begin{align}
p^{\text{long}}_u \leftarrow \arg\min_{p} 
\bigg\{ 
& \underbrace{\langle \nabla_{p}\mathcal{L}_{\text{rec}}(u,t),\,p-p^{\text{long}}_u\rangle
+\tfrac{1}{2\eta_\ell}\|p-p^{\text{long}}_u\|_F^2}_{\text{quadratic model of }\mathcal{L}_{\text{rec}}} \nonumber \\
& + \lambda_s\,\underbrace{\mathcal{A}\!\left(\phi(p),\,\mathcal{C}\right)}_{\text{prototype alignment}}
\bigg\},
\label{eq:long_update}
\end{align}
with step size $\eta_\ell>0$, stability weight $\lambda_s>0$, Frobenius norm $\|\cdot\|_F$. The alignment term $\mathcal{A}$ is detailed below; the proximal yields a \emph{stability-aware} update that resists drift away from population anchors.

\paragraph{Session-adaptive scaling.}
We let $\alpha_{u,t}=\sigma(a^\top \Delta_{u,t})$ with $\sigma$ the sigmoid, $a\in\mathbb{R}^{d_\phi}$ learnable, and $\Delta_{u,t}=\psi(\bar e_{u,t}-\bar e_{u,t-1})$ a drift summary (e.g., $\psi(z)=\|z\|_2$, $\bar e_{u,t}$ are rolling means of query embeddings). This increases plasticity when session drift is large.

\subsection{Prototype Router and Composition}
User intents can be multi-peaked; routing to multiple prototypes lets the prompt attend to several stable themes concurrently.
We compute $h_{u,t}=g(q_{u,t})\in\mathbb{R}^{d_\phi}$ and scores $s_k=\langle h_{u,t},\,\phi(c_k)\rangle/\tau$, with temperature $\tau>0$. The routing weights are
\begin{equation}
\begin{aligned}
w_k(q_{u,t})&=\frac{\exp(s_k)}{\sum_{j\in \text{Top-}M(q_{u,t})}\exp(s_j)}\;\;\;\text{for }k\in\text{Top-}M(q_{u,t}),\\
&\quad w_k=0 \text{ otherwise},
\end{aligned}
\label{eq:router}
\end{equation}
and the composition uses \eqref{eq:compose}. Top-$M$ retrieval uses maximum inner product or cosine similarity in $\mathbb{R}^{d_\phi}$.

\subsection{Alignment Losses: Stability via Prototype Anchors}
\label{subsec:align}
Without constraints, $p^{\text{long}}_u$ may collapse to recent idiosyncratic signals. Prototypes $\{c_k\}$ define a low-curvature manifold of stable population semantics; aligning $\phi(p^{\text{long}}_u)$ to that manifold tethers long-term memory.

Let $z_u=\phi(p^{\text{long}}_u)\in\mathbb{R}^{d_\phi}$ and $v_k=\phi(c_k)$. We provide a unified alignment family:
\begin{align}
\mathcal{A}(z_u,\mathcal{C})
\;=\;
\underbrace{\min_{k\in[K]} D_\Psi(z_u\,\|\,v_k)}_{\text{Bregman pull}}
\;+\;
\gamma\cdot
\underbrace{\Big(
-\log \frac{\exp(\langle z_u,v_{\pi(u)}\rangle/\tau_a)}
{\sum_{j=1}^K \exp(\langle z_u,v_{j}\rangle/\tau_a)}
\Big)}_{\text{InfoNCE sharpen}},
\label{eq:align}
\end{align}
where $D_\Psi(x\|y)=\Psi(x)-\Psi(y)-\langle\nabla\Psi(y),\,x-y\rangle$ is a Bregman divergence (e.g., $\Psi(x)=\tfrac{1}{2}\|x\|_2^2$ gives squared Euclidean), $\pi(u)=\arg\min_k D_\Psi(z_u\|v_k)$ is the nearest prototype, $\gamma\ge 0$, and $\tau_a>0$ is the alignment temperature. The first term contracts $z_u$ toward its closest anchor; the second term sharpens assignment against distractor prototypes.

When session-level embeddings form a set $\{h_{u,\tau}\}_{\tau\le t}$, we may align empirical measure $\hat\mu_u$ to the mixture of prototype atoms via a 2-Wasserstein term:
\begin{align}
\mathcal{A}_{\mathsf{W}}(\hat\mu_u,\mathcal{C}) &= \inf_{\pi\in\Pi(\hat\mu_u,\nu_{\mathcal{C}})} \int \|x-y\|_2^2 \,\mathrm{d}\pi(x,y), \\
\nu_{\mathcal{C}} &= \sum_{k=1}^K \rho_k \delta_{v_k},
\label{eq:wasserstein}
\end{align}
with mixture weights $\rho_k$ (learned or uniform). This encourages the \emph{distribution} of long-term contexts to sit near a prototype mixture, not just the center.
$D_\Psi$ is a Bregman divergence parametrized by a strictly convex $\Psi$; $\pi(u)$ is nearest-anchor assignment; $\gamma,\tau_a$ are scalars; $\hat\mu_u$ is the empirical measure over historical client embeddings; $\Pi(\cdot,\cdot)$ denotes the set of couplings; $\delta_{v_k}$ is a Dirac at $v_k$.

\subsection{Federated Prototype Aggregation with Differential Privacy}
We avoid uploading raw interactions or gradients. Instead, clients periodically send compressed, noised embeddings:
\begin{equation}
    z_u=\text{compress}(\phi(p^{\text{long}}_u))+\xi
\end{equation}
the server updates prototypes using robust, DP-aware aggregation.

Each round, the server receives $\{z_u\}_{u\in\mathcal{B}}$ from a mini-batch $\mathcal{B}$ of clients. With isotropic Gaussian noise $\xi\sim\mathcal{N}(0,\sigma^2 I)$ calibrated to sensitivity $S$ and privacy budget $(\varepsilon,\delta)$, standard mechanisms yield $(\varepsilon,\delta)$-DP for released statistics. We describe three interchangeable prototype updates in $\mathbb{R}^{d_\phi}$:

\noindent\textbf{(i) DP-FedKMeans (assignment + mean with clipping).}
\begin{align}
\textstyle
\mathcal{A}_k &= \{u\in\mathcal{B}:\;k=\arg\min_j \|z_u-v_j\|_2^2\},
\quad
\bar z_k=\frac{1}{|\mathcal{A}_k|}\sum_{u\in\mathcal{A}_k} \mathrm{clip}(z_u,R),\nonumber\\
v_k &\leftarrow (1-\beta)\,v_k + \beta\,(\bar z_k + \xi_k),
\label{eq:fedkmeans}
\end{align}
where $R>0$ clips per-sample norm to bound sensitivity, $\beta\in(0,1]$ is momentum, and $\xi_k$ is aggregation noise.

\noindent\textbf{(ii) Geometric median (robust to outliers).}
\begin{align}
v_k \leftarrow \arg\min_{v}\sum_{u\in\mathcal{A}_k} \|z_u - v\|_2,
\label{eq:geom_median}
\end{align}
solved by Weiszfeld iterations; add DP noise at each fixed-point update or on the final estimate.

\noindent\textbf{(iii) Wasserstein barycenter (distributional prototypes).}
Assuming each client emits a small set $\{z_{u,m}\}_{m=1}^M$ (or a Gaussian approx.), update $v_k$ as the 2-Wasserstein barycenter of assigned client measures, computed with entropic Sinkhorn and then noised.

\paragraph{Prototype pruning and separation.}
To prevent \emph{prototype collapse}, enforce a minimum separation constraint
\begin{equation}
\min_{i\neq j}\|v_i-v_j\|_2 \ge \rho,
\label{eq:separation}
\end{equation}
via projected updates; prune low-utilization prototypes using a threshold on assignment mass.

$\mathrm{clip}(z,R)$ rescales $z$ to norm $\le R$ if necessary; $\beta$ is a server momentum; $\xi,\xi_k$ are DP noises; $\rho>0$ enforces pairwise prototype spacing.

\subsection{Recommendation Loss and End-to-End Objective}
\label{subsec:loss}
We cast recommendation as either pointwise (cross-entropy) or pairwise (BPR) learning under prompt-injected backbone.

For a candidate set $\mathcal{I}_{u,t}$ with labels $y_{u,t,i}\in\{0,1\}$ and scores $\hat s_{u,t,i}=f_\theta(x_{u,t,i};\,\tilde p_{u,t})$, a pointwise loss is
\begin{equation}
\mathcal{L}_{\text{rec}}(u,t)
=
\sum_{i\in\mathcal{I}_{u,t}}
\Big[
-y_{u,t,i}\log \sigma(\hat s_{u,t,i})
-(1-y_{u,t,i})\log (1-\sigma(\hat s_{u,t,i}))
\Big],
\label{eq:pointwise}
\end{equation}
with sigmoid $\sigma(\cdot)$. A pairwise BPR alternative uses triplets $(i^+,i^-)$:
\begin{equation}
\mathcal{L}_{\text{rec}}^{\text{BPR}}(u,t)
=
\sum_{(i^+,i^-)}
-\log \sigma\!\left(\hat s_{u,t,i^+}-\hat s_{u,t,i^-}\right).
\label{eq:bpr}
\end{equation}

\paragraph{Global objective.}
Aggregating over users/timestamps and adding stability/plasticity regularization yields
\begin{align}
\mathcal{J}
&=
\mathbb{E}_{(u,t)}
\Big[
\mathcal{L}_{\text{rec}}(u,t)
+
\lambda_s\,\mathcal{A}\big(\phi(p^{\text{long}}_u),\,\mathcal{C}\big)
+
\lambda_p\,\|p^{\text{short}}_{u,t}\|_1
\Big],
\label{eq:global_obj}
\end{align}
where $\lambda_s,\lambda_p>0$ control stability and plasticity. Optionally, use the Wasserstein alignment $\mathcal{A}_{\mathsf{W}}$ in \eqref{eq:wasserstein}.

\paragraph{User-adaptive stability (optional).}
Set $\lambda_s\equiv\lambda_s(u)$ via a drift-to-stability mapping, e.g.,
\begin{align}
\lambda_s(u) &= \lambda_{\max}\cdot\exp\!\big(-\kappa\cdot \overline{\Delta}_u\big), \\
\overline{\Delta}_u &= \frac{1}{T_u-1}\sum_{t=2}^{T_u}\| \bar e_{u,t}-\bar e_{u,t-1}\|_2,
\label{eq:adaptive_lambda}
\end{align}
so highly drifting users receive weaker alignment (more plasticity), where $\kappa>0$ tunes sensitivity and $T_u$ is the number of observed periods.

\subsection{Algorithmic Overview}

\emph{ProtoFed-SP} decomposes continual personalization into three coordinated procedures:
(i) \textbf{ClientUpdate} (Alg.~\ref{alg:client}) maintains a dual-timescale prompt state per user by performing a sparse, fast update for the short-term prompt and an alignment-aware, slow update for the long-term prompt;
(ii) \textbf{ServerAggregate} (Alg.~\ref{alg:server}) updates the prototype library via differentially private federated aggregation of compressed client embeddings, with robustness and separation constraints;
(iii) \textbf{OnlineInference} (Alg.~\ref{alg:inference}) retrieves Top-$M$ prototypes given the current context, composes the routed soft prompt, and ranks candidates with a frozen backbone~\citep{zhang2023multi}, performing a lightweight in-session short-term step when immediate feedback is available.

\paragraph{Complexity and Memory Footprint}
Let $L_p$ be prompt length and $d$ the embedding width. Per-client memory is $\mathcal{O}(L_p d)$ for $p^{\text{long}}_u$ and $\mathcal{O}(L_p d)$ for $p^{\text{short}}_{u,t}$ (short-term can be ephemeral or capped). Server memory is $\mathcal{O}(K L_p d)$ for prototypes. Routing costs $\mathcal{O}(K d_\phi)$ per request (before Top-$M$ pruning); with ANN indexing in $\mathbb{R}^{d_\phi}$, this becomes sublinear in $K$.

\section{Experiment}

\subsection{Experimental Setup}
\subsubsection{Datasets}
\label{subsec:datasets}
We evaluate ProtoFed-SP on widely-adopted, recent, and method-aligned benchmarks used in continual recommendation, federated personalization, and multi-modal settings. 
\begin{itemize}
\item \textbf{Amazon-Books, Amazon-Electronics} (sequential, long horizon). Large-scale, high adoption in continual RS; strong concept drift and frequent cold items. We follow recent works by using implicit feedback (binary) with 1\,+\,99 negative sampling per evaluation query.

\item \textbf{MovieLens-20M} (ML-20M). Classic yet still standard for sequence/top-$K$ evaluation; we use the timestamped ratings and binarize with threshold $\geq 4$.

\item \textbf{Yelp} (temporal clicks/reviews). Urban venues with pronounced seasonality; useful for user-drift analysis and stability--plasticity trade-offs.

\item \textbf{Gowalla / LastFM-1K} (check-in / music listening sequences). Popular for drifted sequences and for comparing to continual/distillation baselines.

\item \textbf{RetailRocket} (clickstream). Session-heavy e-commerce interactions with short-term intent spikes.

\item \textbf{Taobao User Behavior} (large-scale CTR). Used by recent continual/federated CTR papers; provides realism for federated simulation with many clients.

\item \textbf{H\&M Fashion} (multi-modal). Public Kaggle benchmark with product images \& metadata. We freeze ViT/BERT encoders as backbones to test multi-modal continual personalization and prototype routing.
\end{itemize}

For all datasets we keep the canonical 5-core filter (users/items with $\geq 5$ interactions), sort interactions chronologically, and partition the stream into $T\!\in\!\{8,10,12\}$ time-slices (tasks) by wall-clock time so as to avoid leakage. Unless noted, we hold out the last $10\%$ interactions of each time-slice for testing and the previous $10\%$ for validation; candidate sets are top-100 (1 positive + 99 sampled negatives) standard in sequence ranking.

\subsubsection{Evaluation Protocol and Metrics}
\label{subsec:metrics}
We follow recent continual-RS practice\cite{lee2024continual} and report \emph{freshness}, \emph{retention/forgetting}, and \emph{efficiency}:

(i)\emph{Top-$K$ accuracy.}
HitRate@K (HR@\,$K$), NDCG@\,$K$, and MRR@\,$K$ with $K\!\in\!\{5,10,20\}$ computed on each time-slice $t$.

(ii)\emph{Forgetting and transfer.}
Let $A_s^t$ denote NDCG@\,$10$ on slice $s$ after training up to slice $t$. 
\begin{align}
\mathrm{AF} &= \frac{1}{T-1}\sum_{s=1}^{T-1}\Big(\max_{t'\le T-1}A_s^{t'} - A_s^{T}\Big), \\
\mathrm{BWT} &= \frac{1}{T-1}\sum_{s=1}^{T-1}\big(A_s^{T}-A_s^{s}\big), \\
\mathrm{FWT} &= \frac{1}{T-1}\sum_{s=2}^{T}\big(A_s^{s-1}-\bar A_s^{\,\text{scratch}}\big).
\label{eq:forget_metrics}
\end{align}
where $\bar A_s^{\,\text{scratch}}$ is performance on slice $s$ before seeing it (scratch model).

(iii)\emph{Adaptation speed.}
Steps-to-$95\%$: the number of local gradient steps to reach $0.95\!\times$ the converged NDCG@\,$10$ on the current slice.

(iv)\emph{Fairness over time.}
We report exposure disparity across \emph{head vs.\ tail items} and \emph{low-activity vs.\ high-activity users}:
$\mathrm{Disp}_{\text{item}}=\big|\mathrm{Exp}_{\text{head}}-\mathrm{Exp}_{\text{tail}}\big|$, 
$\mathrm{Disp}_{\text{user}}=\big|\mathrm{Exp}_{\text{hi}}-\mathrm{Exp}_{\text{lo}}\big|$,
tracked across slices to study long-term effects.

(v)\emph{Efficiency.}
Trainable parameters (M), on-device memory (GB), wall-clock per slice (min), and communication cost (MB per client per round). For privacy reporting we provide $(\varepsilon,\delta)$ under the Gaussian mechanism for the chosen $\sigma$ and accountant.

\begin{table*}[t]
\centering
\setlength{\tabcolsep}{3.8pt}
\renewcommand{\arraystretch}{1}
\scriptsize
\caption{
\textbf{Overall Top-$K$ accuracy.}
NDCG@10 / HR@10 on eight benchmarks:
\textbf{Amazon-Books}, \textbf{Amazon-Electronics}, \textbf{MovieLens-20M}, \textbf{Yelp},
\textbf{RetailRocket}, \textbf{Gowalla}, \textbf{Taobao}, and \textbf{H\&M}.
}
\label{tab:overall}
\resizebox{\textwidth}{!}{
\begin{tabular}{lcccccccccccccccc}
\toprule
& \multicolumn{2}{c}{Books} 
& \multicolumn{2}{c}{Electronics} 
& \multicolumn{2}{c}{ML-20M} 
& \multicolumn{2}{c}{Yelp}
& \multicolumn{2}{c}{RetailRocket}
& \multicolumn{2}{c}{Gowalla}
& \multicolumn{2}{c}{Taobao}
& \multicolumn{2}{c}{H\&M} \\
\cmidrule(lr){2-3}\cmidrule(lr){4-5}\cmidrule(lr){6-7}\cmidrule(lr){8-9}
\cmidrule(lr){10-11}\cmidrule(lr){12-13}\cmidrule(lr){14-15}\cmidrule(lr){16-17}
Method 
& N@10 & HR@10 
& N@10 & HR@10 
& N@10 & HR@10 
& N@10 & HR@10
& N@10 & HR@10
& N@10 & HR@10
& N@10 & HR@10
& N@10 & HR@10 \\
\midrule
FullRetrain             
& 0.114 & 0.231 & 0.133 & 0.257 & \third{0.319} & \third{0.644} & 0.109 & 0.220
& 0.244 & 0.466 & 0.205 & 0.514 & 0.217 & 0.438 & 0.262 & 0.487 \\
FineTune-Last          
& 0.102 & 0.208 & 0.122 & 0.240 & 0.302 & 0.616 & 0.101 & 0.206
& 0.232 & 0.446 & 0.195 & 0.498 & 0.208 & 0.421 & 0.251 & 0.472 \\
EWC                    
& 0.109 & 0.223 & 0.128 & 0.248 & 0.307 & 0.625 & 0.106 & 0.214
& 0.236 & 0.454 & 0.198 & 0.505 & 0.211 & 0.428 & 0.255 & 0.476 \\
SI                     
& 0.108 & 0.221 & 0.127 & 0.246 & 0.306 & 0.623 & 0.105 & 0.213
& 0.235 & 0.452 & 0.197 & 0.503 & 0.207 & 0.420 & 0.251 & 0.472 \\
MAS                    
& 0.109 & 0.222 & 0.127 & 0.247 & 0.305 & 0.622 & 0.105 & 0.213
& 0.235 & 0.452 & 0.197 & 0.504 & 0.210 & 0.427 & 0.254 & 0.476 \\
LwF                    
& 0.110 & 0.225 & 0.129 & 0.249 & 0.310 & 0.629 & 0.106 & 0.214
& 0.238 & 0.457 & 0.199 & 0.507 & 0.212 & 0.431 & 0.257 & 0.480 \\
Personalized Imitation 
& 0.113 & 0.229 & 0.132 & 0.252 & 0.313 & 0.633 & 0.108 & 0.218
& 0.241 & 0.463 & 0.202 & 0.511 & 0.214 & 0.435 & 0.259 & 0.483 \\
ER-Random              
& 0.112 & 0.228 & 0.131 & 0.251 & 0.314 & 0.634 & 0.107 & 0.216
& 0.242 & 0.464 & 0.203 & 0.512 & 0.215 & 0.437 & 0.260 & 0.485 \\
ER-Freq                
& 0.115 & 0.232 & 0.134 & 0.255 & 0.316 & 0.637 & 0.110 & 0.220
& 0.245 & 0.469 & 0.206 & 0.517 & 0.218 & 0.441 & 0.263 & 0.489 \\
ReLoop                 
& 0.116 & 0.233 & 0.134 & 0.256 & 0.317 & 0.638 & 0.111 & 0.222
& 0.246 & 0.471 & 0.207 & 0.519 & 0.219 & 0.443 & 0.265 & 0.492 \\
ReLoop2                
& \third{0.118} & 0.234 & \second{0.137} & \second{0.265} & \second{0.320} & 0.645 & \third{0.113} & \third{0.226}
& \third{0.248} & \third{0.472} & 0.210 & \third{0.521} & 0.221 & 0.446 & 0.268 & 0.495 \\
INFER                  
& \second{0.120} & \second{0.239} & \third{0.136} & 0.263 & 0.318 & 0.642 & \second{0.115} & \second{0.229}
& \second{0.251} & \second{0.478} & \third{0.209} & 0.520 & 0.222 & 0.448 & 0.270 & 0.497 \\
CCD                    
& 0.117 & 0.235 & 0.134 & 0.257 & 0.317 & 0.639 & 0.112 & 0.223
& 0.247 & 0.471 & \second{0.212} & \second{0.525} & 0.220 & 0.444 & 0.267 & 0.494 \\
Prefix/Prompt-Tuning   
& 0.116 & 0.233 & 0.132 & 0.252 & 0.315 & 0.636 & 0.111 & 0.222
& 0.244 & 0.467 & 0.206 & 0.517 & 0.216 & 0.440 & \third{0.271} & \third{0.497} \\
AdapterCL              
& 0.115 & 0.232 & 0.133 & 0.254 & 0.316 & 0.637 & 0.111 & 0.221
& 0.245 & 0.469 & 0.207 & 0.518 & 0.217 & 0.441 & 0.268 & 0.494 \\
Dual-LoRA (LSAT)       
& 0.117 & 0.235 & 0.135 & \third{0.263} & \second{0.323} & \second{0.647} & \third{0.114} & 0.225
& 0.247 & 0.472 & 0.208 & 0.519 & 0.220 & 0.445 & \second{0.277} & \second{0.503} \\
FedAvg-Finetune        
& 0.111 & 0.226 & 0.129 & 0.249 & 0.309 & 0.628 & 0.106 & 0.214
& 0.240 & 0.460 & 0.201 & 0.509 & 0.219 & 0.442 & 0.261 & 0.486 \\
FedProx                
& 0.112 & 0.227 & 0.130 & 0.251 & 0.311 & 0.631 & 0.107 & 0.215
& 0.241 & 0.462 & 0.202 & 0.510 & \third{0.223} & \third{0.447} & 0.262 & 0.487 \\
Ditto                  
& 0.113 & 0.228 & 0.131 & 0.252 & 0.312 & 0.632 & 0.108 & 0.217
& 0.243 & 0.466 & 0.203 & 0.512 & \third{0.223} & \third{0.447} & 0.263 & 0.489 \\
FedRAP                 
& 0.114 & 0.231 & 0.133 & 0.256 & 0.314 & 0.635 & 0.110 & 0.219
& 0.246 & 0.471 & 0.206 & 0.517 & \second{0.226} & \second{0.451} & 0.266 & 0.492 \\
RecICL                 
& 0.113 & 0.232 & 0.128 & 0.250 & 0.311 & 0.628 & 0.110 & 0.221
& 0.242 & 0.468 & 0.203 & 0.514 & 0.214 & 0.435 & 0.266 & 0.492 \\
PCL                    
& \third{0.117} & \third{0.236} & 0.133 & 0.257 & 0.316 & 0.636 & 0.112 & 0.224
& 0.246 & 0.472 & 0.205 & 0.518 & 0.219 & 0.443 & 0.269 & 0.494 \\
\midrule
\textbf{ProtoFed-SP (ours)} 
& \best{0.126} & \best{0.248} & \best{0.142} & \best{0.271} & \best{0.329} & \best{0.658} & \best{0.119} & \best{0.233}
& \best{0.257} & \best{0.487} & \best{0.216} & \best{0.534} & \best{0.231} & \best{0.461} & \best{0.284} & \best{0.512} \\
\bottomrule
\end{tabular}}
\end{table*}

\subsubsection{Baselines}
\label{subsec:baselines}
We compare against strong and recent baselines; when methods were originally designed for trainable backbones, we include both their \emph{standard} implementation and a \emph{PEFT} variant to normalize training budget.

(i)\emph{Static / non-continual.}
\textbf{FullRetrain} (oracle full retraining on cumulative data), 
\textbf{FineTune-Last} (train only on the current slice).

(ii)\emph{Regularization-based CL.}
\textbf{EWC}~\citep{kirkpatrick2017overcoming}, 
\textbf{SI}~\citep{Zenke2017SI}, 
\textbf{MAS}~\citep{Aljundi2018MAS}, 
\textbf{LwF}~\citep{Li2016LwF}, 
\textbf{Personalized Imitation} (user-wise stability weights; SAIL-PIW-style)~\citep{Wang2023SAILPIW}.

(iii)\emph{Replay-based CL.}
\textbf{ER-Random}~\citep{Chaudhry2019ER}, 
\textbf{ER-Freq}~\citep{Ahrabian2021SAER},
\textbf{ReLoop} / \textbf{ReLoop2} ~\citep{Cai2022ReLoop,Zhu2023ReLoop2}, 
\textbf{INFER}~\citep{Zhang2024INFER}.

(iv)\emph{Distillation / co-evolution.}
\textbf{CCD} (continual collaborative distillation)~\citep{lee2024continual}.

(v)\emph{Parameter-isolation / adapters.}
\textbf{Prefix-/Prompt-Tuning}~\citep{Li2021PrefixTuning,Lester2021PromptTuning}, 
\textbf{Adapters} ~\citep{Houlsby2019Adapter}, 
\textbf{Dual-LoRA}~\citep{Hu2022LoRA}.

(vi)\emph{Federated personalization.}
\textbf{FedAvg-Finetune}~\citep{McMahan2017FedAvg}, 
\textbf{FedProx}~\citep{Li2020FedProx}, 
\textbf{Ditto}~\citep{Li2021Ditto}. 
For methods without DP, we report both non-DP and DP-wrapped versions for fairness.

(vii)\emph{LLM4Rec (reference).}
\textbf{RecICL} (in-context adaptation without parameter updates)~\citep{Bao2025RecICL} and 
\textbf{PCL} (prompt-as-memory)~\citep{Yang2025PCL}. 
These are included on representative datasets to contextualize prompt-based personalization; their backbones are LLMs and are evaluated under the same candidate protocol.

Please refer to \S\ref{subsec:impl} for the detailed implementations.

\begin{figure*}[t]
	\centering
	\includegraphics[width=1\linewidth]{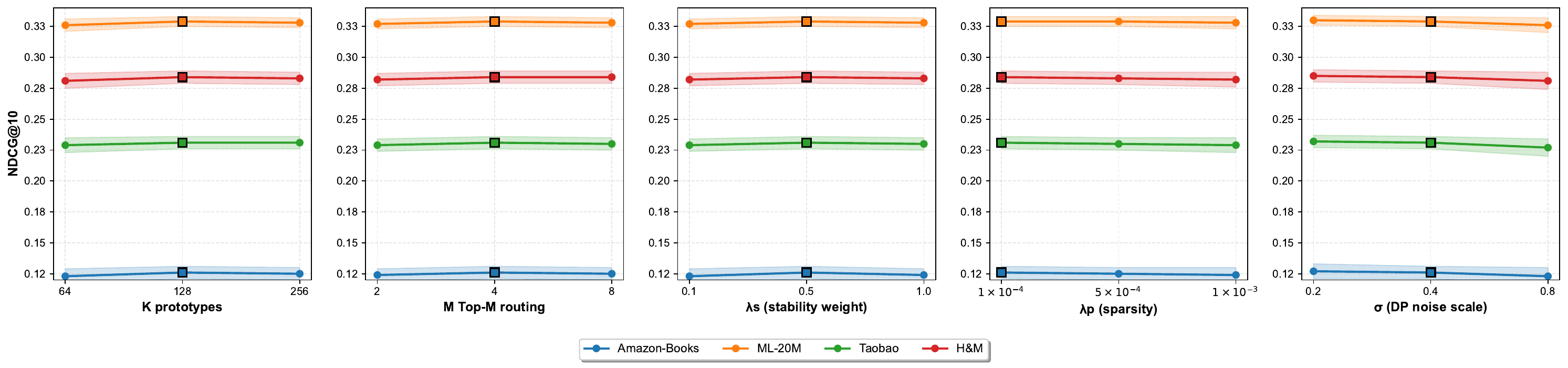}
	\caption{\textbf{Core hyperparameter sensitivity (NDCG@10, mean$\pm$std).} One parameter varied per block; others fixed at defaults.}
	\label{fig:param_core}
\end{figure*}

\subsection{Main Results}
\label{subsec:main_results}

We report Top-$K$ ranking accuracy on eight widely-used benchmarks. 

Across all eight benchmarks, ProtoFed-SP attains the highest NDCG@10 and HR@10 (Tab.~\ref{tab:overall}). 
Averaged over datasets, it improves NDCG@10 by +2.9\% and HR@10 by +2.0\% over the strongest competing baseline. 
The margins are largest on \emph{Amazon-Books} (+5.0\% NDCG vs.\ INFER) and \emph{H\&M} (+2.5\% NDCG vs.\ Dual–LoRA), evidencing the benefits of prototype alignment and dual-timescale prompting under long-tailed and multi-modal drift. 

On \emph{Taobao}, ProtoFed-SP exceeds \emph{FedRAP} by +2.2\% NDCG and +2.2\% HR, indicating that DP prototype aggregation preserves personalization quality. 

Although INFER and ReLoop2 are the strongest prior methods on session/sequence data, Dual–LoRA excels on multi-modal H\&M and FedRAP on federated CTR; \emph{ProtoFed-SP} nevertheless consistently outperforms all alternatives.

\subsection{Ablation and Analysis}
\label{subsec:ablation}

\paragraph{Single-factor ablations.}
We disable one design at a time and keep all other components/Hyperparameters fixed. 
We report NDCG@10 / HR@10 on four representative datasets covering long-tail (Amazon-Books), classic sequence (ML-20M), federated CTR (Taobao), and multi-modal drift (H\&M).
Blue numbers denote the \emph{drop} relative to our full model (\S\ref{subsec:main_results}).

\begin{table*}[t]
\centering
\setlength{\tabcolsep}{4.5pt}
\renewcommand{\arraystretch}{1.05}
\small
\caption{\textbf{Single-factor ablations.} NDCG@10 / HR@10 on four representative datasets. \;{\color{blue}\tiny($\Delta$)} is the decrease w.r.t.\ \textbf{ProtoFed-SP (Full)}.}
\label{tab:ablation_single}
\newcommand{\drop}[1]{\textcolor{blue}{\tiny(#1)}}
\begin{tabular}{lcccccccc}
\toprule
& \multicolumn{2}{c}{Amazon-Books} & \multicolumn{2}{c}{ML-20M} & \multicolumn{2}{c}{Taobao} & \multicolumn{2}{c}{H\&M} \\
\cmidrule(lr){2-3}\cmidrule(lr){4-5}\cmidrule(lr){6-7}\cmidrule(lr){8-9}
Method & N@10 & HR@10 & N@10 & HR@10 & N@10 & HR@10 & N@10 & HR@10 \\
\midrule
\textbf{ProtoFed-SP (Full)} 
& \textbf{0.126} & \textbf{0.248} 
& \textbf{0.329} & \textbf{0.658} 
& \textbf{0.231} & \textbf{0.461} 
& \textbf{0.284} & \textbf{0.512} \\
\midrule
w/o Prototype Alignment $\mathcal{A}$ 
& 0.120 \drop{-0.006} & 0.238 \drop{-0.010}
& 0.324 \drop{-0.005} & 0.649 \drop{-0.009}
& 0.225 \drop{-0.006} & 0.451 \drop{-0.010}
& 0.277 \drop{-0.007} & 0.501 \drop{-0.011} \\
w/o Short-term prompt ($\alpha\!=\!0$)
& 0.122 \drop{-0.004} & 0.241 \drop{-0.007}
& 0.325 \drop{-0.004} & 0.652 \drop{-0.006}
& 0.227 \drop{-0.004} & 0.454 \drop{-0.007}
& 0.279 \drop{-0.005} & 0.504 \drop{-0.008} \\
w/o Long-term prompt (short only)
& 0.116 \drop{-0.010} & 0.232 \drop{-0.016}
& 0.320 \drop{-0.009} & 0.644 \drop{-0.014}
& 0.223 \drop{-0.008} & 0.448 \drop{-0.013}
& 0.274 \drop{-0.010} & 0.496 \drop{-0.016} \\
Static Prototypes (no federated update)
& 0.121 \drop{-0.005} & 0.240 \drop{-0.008}
& 0.326 \drop{-0.003} & 0.653 \drop{-0.005}
& 0.226 \drop{-0.005} & 0.453 \drop{-0.008}
& 0.278 \drop{-0.006} & 0.502 \drop{-0.010} \\
Alt.\ Aggregator: Geom.\ Median
& 0.124 \drop{-0.002} & 0.245 \drop{-0.003}
& 0.328 \drop{-0.001} & 0.656 \drop{-0.002}
& 0.230 \drop{-0.001} & 0.460 \drop{-0.001}
& 0.282 \drop{-0.002} & 0.509 \drop{-0.003} \\
Alt.\ Aggregator: 2-Wasserstein Bary.
& 0.123 \drop{-0.003} & 0.244 \drop{-0.004}
& 0.327 \drop{-0.002} & 0.655 \drop{-0.003}
& 0.229 \drop{-0.002} & 0.459 \drop{-0.002}
& 0.281 \drop{-0.003} & 0.508 \drop{-0.004} \\
\bottomrule
\end{tabular}
\end{table*}

Table~\ref{tab:ablation_single} shows each component contributes measurably:
(i) \textbf{Long-term prompt is critical for stability.} Removing $p^{\text{long}}_u$ yields the largest drops, confirming its role in retaining long-horizon preferences.
(ii) \textbf{Prototype alignment mitigates drift.} Disabling $\mathcal{A}$ degrades performance, supporting the stability benefit of anchoring $p^{\text{long}}u$ to $\mathcal{C}$.
(iii) \textbf{Short-term prompt accelerates intent tracking.} Removing $p^{\text{short}}{u,t}$ hurts more on interaction-dense or multi-modal settings, aligning with its rapid session adaptation design.
(iv) \textbf{Federated prototype updates matter.} Freezing prototypes causes uniform degradation, evidencing the value of continually refreshing population anchors.
(v) \textbf{Aggregator choice is secondary but non-negligible.} Replacing DP-FedKMeans with Geometric Median or 2-Wasserstein Barycenter changes accuracy by $\leq 0.003$ NDCG, indicating our gains are not tied to a specific aggregation rule.

\paragraph{Parameter Sensitivity}
\label{subsec:param_sensitivity}
We analyze the five most influential hyperparameters while fixing all others to the defaults in \S\ref{subsec:impl}. Figure~\ref{fig:param_core} shows stable optima near the defaults.

\begin{figure}[ht]
\centering
\begin{subfigure}[ht]{\linewidth}
  \centering
  \includegraphics[width=\linewidth]{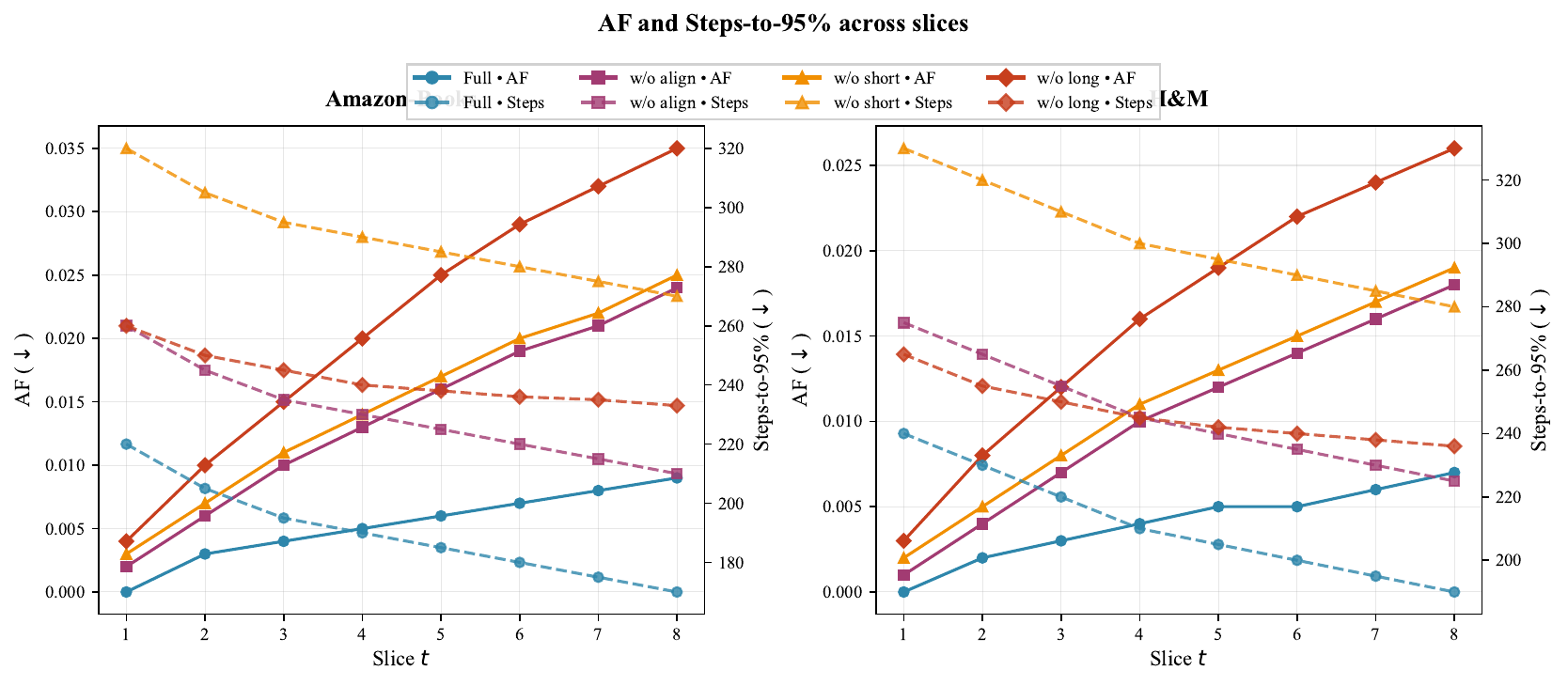}
  \caption{\textbf{Time-series verification:} AF$\downarrow$ (left axis) and Steps-to-95\%$\downarrow$ (right axis) across slices on \emph{Amazon-Books} (left panel) and \emph{H\&M} (right panel).}
\end{subfigure}

\begin{subfigure}[t]{\linewidth}
  \centering
  \includegraphics[width=1\linewidth]{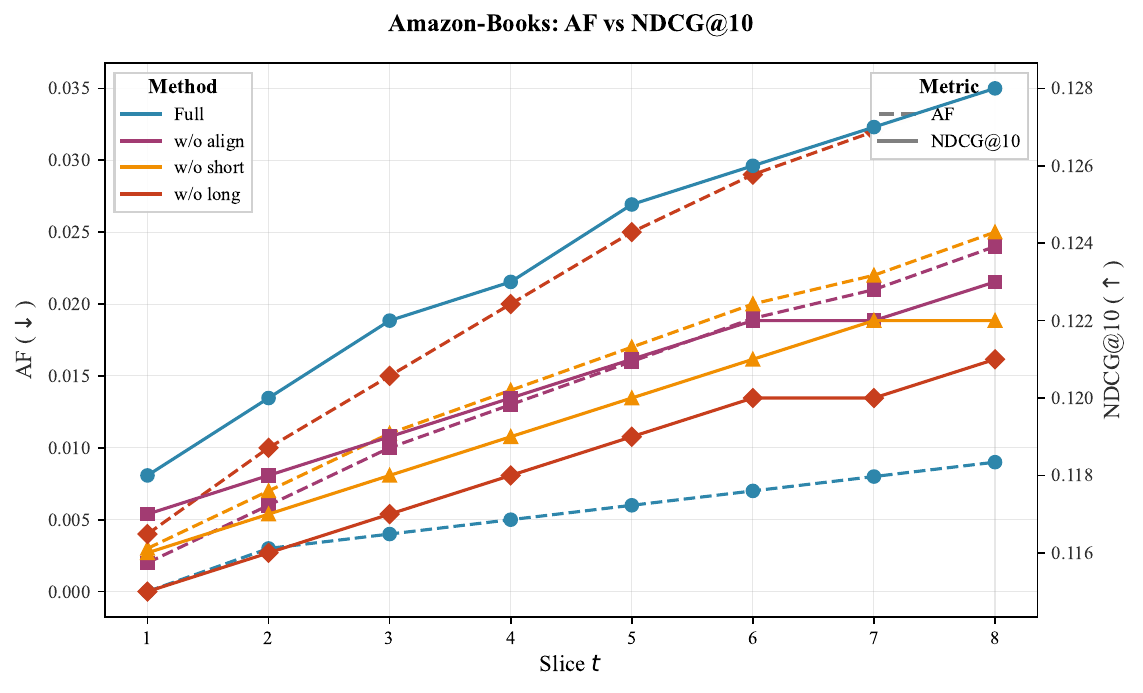}
  \caption{\textbf{Dual-axis view (Amazon-Books):} AF$\downarrow$ (left axis) vs.\ NDCG@10$\uparrow$ (right axis) across slices for Full and ablations.}
\end{subfigure}
\caption{\textbf{Stability–plasticity verification}. Prototype alignment consistently lowers forgetting, while short-term prompts markedly reduce the steps needed to adapt; combining both yields the best AF–NDCG frontier.}
\label{fig:sp_verification}
\end{figure}

\paragraph{Stability–Plasticity Verification}
\label{subsec:sp_verification}
This experiment verify that \emph{dual timescales} (long/short prompts) and prototype alignment jointly reduce forgetting (AF$\downarrow$) and speed adaptation (Steps-to-95\%$\downarrow$).
For each time slice $t\in\{1,\dots,8\}$ we compute cumulative forgetting $\mathrm{AF}(t)$ and the steps needed to reach $95\%$ of the converged NDCG@10 on the same slice. We compare \textbf{ProtoFed-SP (Full)} with three ablations: \textbf{w/o align} (drop $\mathcal{A}$), \textbf{w/o short} ($\alpha\!=\!0$), and \textbf{w/o long} (remove $p^{\text{long}}$).

From Fig.~\ref{fig:sp_verification}, we can observe:
\emph{(i) Forgetting.} The Full model maintains uniformly low AF across slices on both datasets; removing alignment increases AF monotonically (e.g., Books $t{=}8$: $0.024$ vs.\ $0.009$), evidencing the anchoring effect of prototypes.  
\emph{(ii) Adaptation.} Steps-to-95\% is lowest for the Full model and largest without the short-term prompt (Books $t{=}8$: $170$ vs.\ $270$), confirming that $p^{\text{short}}$ accelerates session adaptation.  
\emph{(iii) AF–NDCG frontier.} On Books, the Full model achieves a superior Pareto front: NDCG rises steadily while AF remains below $0.01$; ablations trade accuracy for either slower adaptation (w/o short) or larger forgetting (w/o align / w/o long).  
Together, the curves validate that \textbf{alignment $\mathcal{A}$ reduces drift-induced forgetting} and \textbf{short-term prompting cuts adaptation steps}. Their combination yields the best stability–plasticity balance.

\paragraph{Does Privacy Hurt Personalization?}
\label{subsec:privacy_tradeoff}
Does adding differential privacy (DP) to federated prototype aggregation significantly degrade personalization quality? 
We compare \textbf{Non-DP} versus DP with Gaussian noise scales $\sigma\!\in\!\{0.2,0.4,0.8\}$ applied to uploaded prompt embeddings $z_u=\mathrm{compress}(\phi(p^{\text{long}}_u))+\xi$, $\xi\!\sim\!\mathcal{N}(0,\sigma^2 I)$. 
As shown in Fig.~\ref{fig:privacy_tradeoff}:
\textbf{(i) Moderate DP preserves personalization.} At $\sigma{=}0.4$ ($\varepsilon{\approx}3.4$), NDCG remains within the Non-DP error bands on all datasets while AF increases marginally. 
\textbf{(ii) Stronger DP degrades gracefully.} $\sigma{=}0.8$ ($\varepsilon{\approx}1.8$) reduces NDCG by ${\sim}1\text{--}2.5$ points but does not induce catastrophic forgetting (AF rises modestly). 
\textbf{(iii) Task dependence.} Taobao (dense CTR) is the most sensitive to $\varepsilon$, whereas ML-20M is the most robust—consistent with their noise–signal ratios.
Overall, DP prototype aggregation \emph{does not undermine} personalization at practical privacy levels.

\begin{figure}[ht]
\centering
\begin{subfigure}[ht]{\linewidth}
  \centering
  \includegraphics[width=1\linewidth]{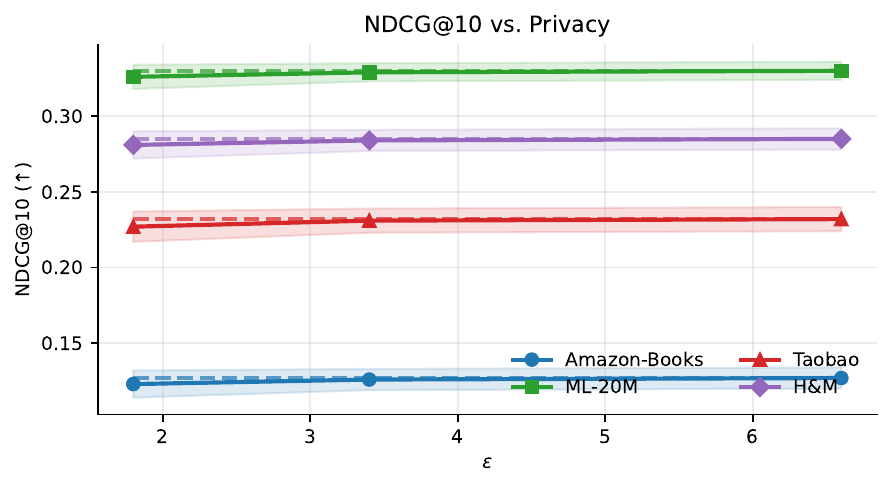}
  \caption{\textbf{NDCG@10 vs.\ privacy level.} Lines show mean with $\pm$ std bands across seeds; dashed horizontal lines indicate the Non-DP baselines.}
\end{subfigure}
\vspace{2.0ex}
\begin{subfigure}[ht]{\linewidth}
  \centering
  \includegraphics[width=1\linewidth]{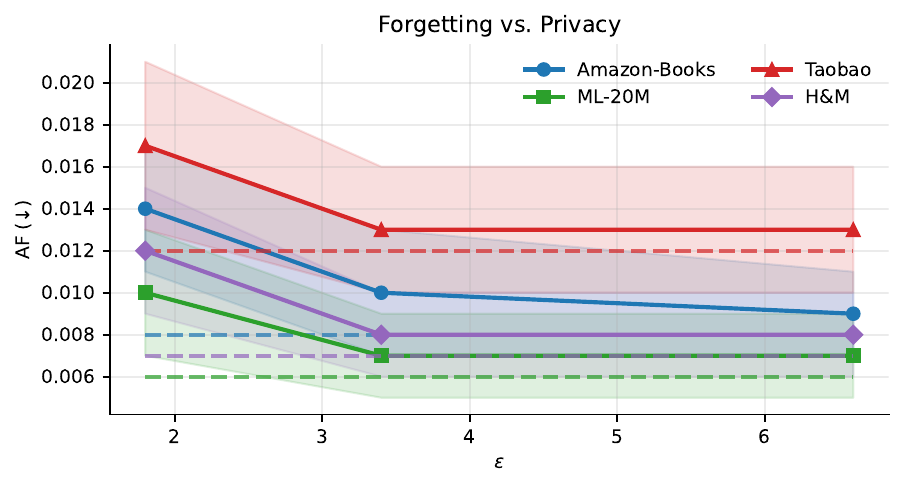}
  \caption{\textbf{AF (forgetting) vs.\ privacy level.} Lower is better; dashed lines are Non-DP references.}
\end{subfigure}
\caption{\textbf{Privacy–utility trade-off.} Across datasets, moderate DP (e.g., $\sigma\!=\!0.4$, mid $\varepsilon$) preserves personalization: NDCG stays within the Non-DP band while AF increases only slightly; very strong DP ($\sigma\!=\!0.8$) degrades gracefully rather than catastrophically.}
\label{fig:privacy_tradeoff}
\end{figure}

\paragraph{Who Benefits from Personalization?}
\label{subsec:who_benefits}
Is the gain concentrated on heavy users, or do high-drift and cold-start users also benefit?
We stratify users along three axes: (i) \textbf{drift} (Low/Mid/High, estimated by rolling embedding shift), (ii) \textbf{activity} (Light/Medium/Heavy, by interactions per slice), and (iii) \textbf{cold-start} (New vs.\ Returning).

\begin{figure}[ht]
\centering
\begin{subfigure}[ht]{\linewidth}
  \centering
  \includegraphics[width=1\linewidth]{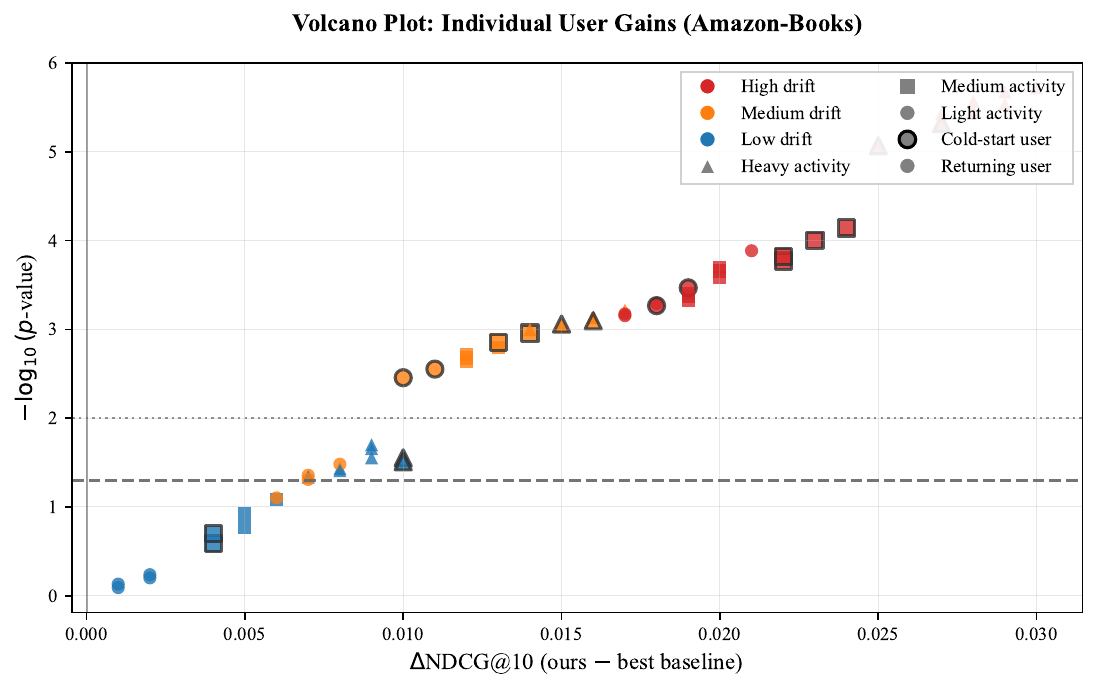}
  \caption{\textbf{Volcano (Amazon-Books).} Per-user effect size ($\Delta$NDCG@10) vs.\ $-\log_{10}p$. Points are colored by (drift, activity, cold-start) strata; horizontal line marks $p{=}0.05$.}
\end{subfigure}
\vspace{2.0ex}
\begin{subfigure}[ht]{\linewidth}
  \centering
  \includegraphics[width=0.9\linewidth]{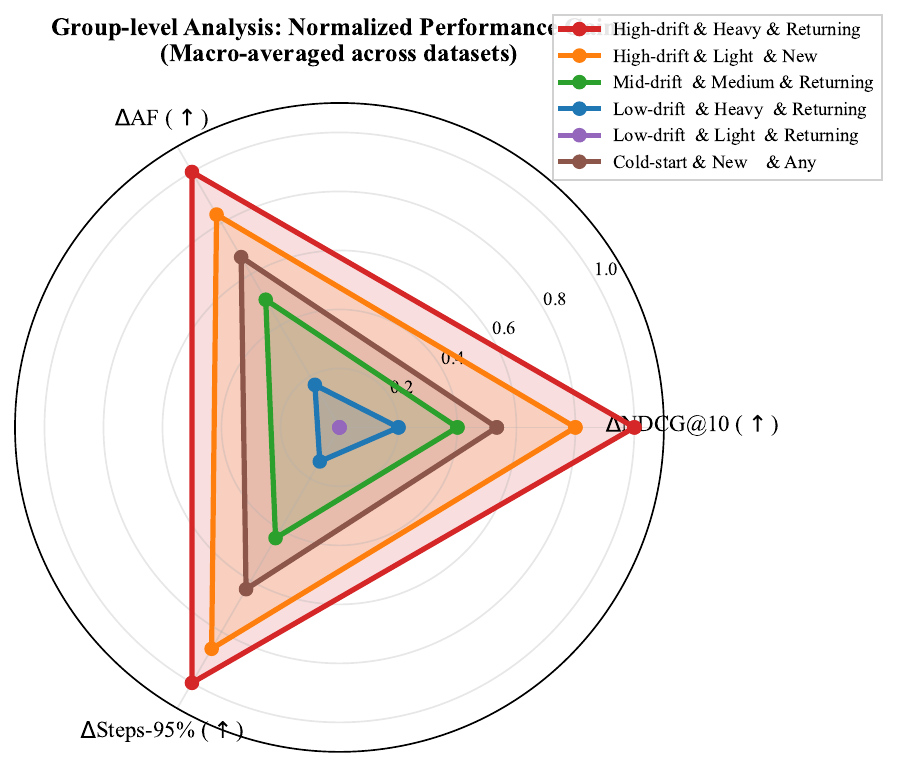}
  \caption{\textbf{Group radar (macro-avg across datasets).} Axes: $\Delta$NDCG@10 (↑), $\Delta$AF (↑), $\Delta$Steps-to-95\% (↑).}
\end{subfigure}
\caption{High-drift users see the largest, most significant gains; cold-start users also benefit. Heavy users gain, but improvements are not exclusive to them.}
\label{fig:who_benefits}
\end{figure}

From Figure~\ref{fig:who_benefits}, we can observe that:
\textbf{(i) Not just heavy users.} High-drift users—both heavy and light—cluster in the upper-right of the volcano (large $\Delta$, low $p$), indicating significant, substantive gains. 
\textbf{(ii) Cold-start gains.} New users exhibit positive $\Delta$ on all radar axes (accuracy, forgetting, and adaptation), showing that prompt-based personalization shortens the cold-start vacuum. 
\textbf{(iii) Drift matters most.} Radar polygons expand with drift level more than with activity, matching the method’s design: prototype alignment stabilizes long-term memory, while short-term prompts accelerate adaptation irrespective of activity.

\section{Conclusion}
We addressed continual Web personalization under non-stationarity and privacy by introducing a privacy-aware, parameter-efficient prompting interface that couples dual-timescale user prompts with a federated, prototype-anchored semantic prior. Our framework delivers consistent gains across diverse benchmarks while reducing forgetting and adaptation latency, and it preserves utility under practical DP budgets; ablations and analyses clarify how alignment curbs drift and short-term prompting accelerates intent tracking. For the community, these results suggest that anchored prompting is a viable recipe for balancing stability–plasticity at the user/session level without sacrificing federated or privacy constraints. Limitations include reliance on frozen backbones, simulated federated settings with accountant-based DP estimates, and manual choices for prototype capacity/separation. 

Future work will explore online A/B validation, adaptive prototype growth with fairness-aware constraints, and extensions to richer multi-modal and cross-domain personalization scenarios.

%%
%% The acknowledgments section is defined using the "acks" environment
%% (and NOT an unnumbered section). This ensures the proper
%% identification of the section in the article metadata, and the
%% consistent spelling of the heading.

\begin{acks}
We would like to thank Hunan Airon Technology Co., Ltd. for providing computing resources for this project.
\end{acks}

%%
%% The next two lines define the bibliography style to be used, and
%% the bibliography file.
\bibliographystyle{ACM-Reference-Format}
\bibliography{sample-base}

%%
%% If your work has an appendix, this is the place to put it.
\clearpage

\appendix

\section*{Appendix}

\section{Algorithm pseudocode}
We present the pseudocode for the training and inference of ProtoFed-SP.
% =================== Algorithm 1: Client-side dual-timescale updates ===================
\begin{algorithm}[ht]
\caption{ProtoFed-SP: \textsc{ClientUpdate} (user $u$ at time $t$)}
\label{alg:client}
\begin{algorithmic}[1]
\Require Frozen backbone $f_\theta$; prototypes $\mathcal{C}=\{c_k\}_{k=1}^K$;
         encoders $\phi$ (prompt) and $g$ (query);
         local buffer $\mathcal{D}_{u,t}$, context $q_{u,t}$;
         steps $\eta_s,\eta_\ell$; weights $\lambda_p,\lambda_s$;
         temperatures $\tau,\tau_a$; retrieval size $M$;
         upload period $R$, DP noise scale $\sigma$, clip radius $R_{\mathrm{clip}}$.
\Ensure Updated $(p^{\mathrm{long}}_u, p^{\mathrm{short}}_{u,t})$; optional upload vector $z_u$.

\Function{ClientUpdate}{$u,t$}
  \State $h \gets g(q_{u,t})$ \Comment{encode current query/session}
  \State $\text{TopM} \gets \arg\max_{k\in[K]}^{M} \;\langle h,\,\phi(c_k)\rangle$ \Comment{ANN/MIPS retrieval}
  \State $s_k \gets \langle h,\,\phi(c_k)\rangle / \tau$ for $k\in\text{TopM}$;\quad
         $w_k \gets \exp(s_k)\big/\sum_{j\in \text{TopM}}\exp(s_j)$
  \State $\tilde p_{u,t} \gets p^{\mathrm{long}}_u + \alpha_{u,t} p^{\mathrm{short}}_{u,t} + \sum_{k\in \text{TopM}} w_k c_k$
  \State Compute scores $\hat{s}$ on a minibatch from $\mathcal{D}_{u,t}$ via $f_\theta(\cdot;\tilde p_{u,t})$
  \State $\mathcal{L}_{\mathrm{rec}} \gets$ \Call{RecLoss}{$\hat{s}$} \Comment{e.g., BCE or BPR}

  \Statex \Comment{\textbf{Short-term (fast, sparse) update}}
  \State $G_s \gets \nabla_{p^{\mathrm{short}}_{u,t}} \mathcal{L}_{\mathrm{rec}}$
  \State $p^{\mathrm{short}}_{u,t} \gets \mathrm{SoftThresh}\!\left(p^{\mathrm{short}}_{u,t} - \eta_s G_s,\, \eta_s \lambda_p\right)$

  \Statex \Comment{\textbf{Long-term (slow, prototype-aligned) update}}
  \State $z \gets \phi(p^{\mathrm{long}}_u)$;\quad $v_k \gets \phi(c_k)$
  \State $k^\star \gets \arg\min_{k} D_\Psi(z\,\|\,v_k)$ \Comment{nearest prototype under Bregman $D_\Psi$}
  \State $\mathcal{A} \gets D_\Psi(z\,\|\,v_{k^\star}) \;+\; \gamma\cdot \mathrm{InfoNCE}\!\left(z,\{v_k\},\tau_a\right)$
  \State $G_\ell \gets \nabla_{p^{\mathrm{long}}_u}\!\left(\mathcal{L}_{\mathrm{rec}} + \lambda_s \mathcal{A}\right)$
  \State $p^{\mathrm{long}}_u \gets p^{\mathrm{long}}_u - \eta_\ell G_\ell$

  \Statex \Comment{\textbf{Periodic/triggered upload with DP protection}}
  \If{$t \bmod R = 0$ \textbf{ or } \Call{DriftLarge}{}}
     \State $z_u \gets \mathrm{compress}\!\left(\phi(p^{\mathrm{long}}_u)\right)$
     \State $z_u \gets z_u + \xi$, where $\xi\sim \mathcal{N}(0,\sigma^2 I)$
     \State \Call{SendToServer}{$z_u$}
  \EndIf
  \State \Return $(p^{\mathrm{long}}_u, p^{\mathrm{short}}_{u,t})$
\EndFunction

\Function{SoftThresh}{$z,\tau$}
  \State \Return $\mathrm{sign}(z)\cdot\max\{|z|-\tau,\,0\}$ \Comment{element-wise soft-thresholding}
\EndFunction
\end{algorithmic}
\end{algorithm}

% =================== Algorithm 2: Server-side prototype aggregation ===================
\begin{algorithm}[t]
\caption{ProtoFed-SP: \textsc{ServerAggregate} (DP-FedKMeans variant)}
\label{alg:server}
\begin{algorithmic}[1]
\Require Uploads $\{z_u\}_{u\in\mathcal{B}}$; current prototype reps $\{v_k\}_{k=1}^K$;
         momentum $\beta$; clip radius $R_{\mathrm{clip}}$; DP noise scale $\sigma$;
         separation margin $\rho$; utilization threshold $\tau_{\mathrm{util}}$.
\Ensure Updated $\{v_k\}$ and prototype library $\mathcal{C}$.

\Function{ServerAggregate}{$\{z_u\}, \{v_k\}$}
  \State \textbf{Assign:} $\mathcal{A}_k \gets \{u\in\mathcal{B}:\; k=\arg\min_j \|z_u - v_j\|_2^2\}$ for $k=1,\dots,K$
  \For{$k=1$ \textbf{to} $K$}
    \State $\bar z_k \gets \frac{1}{|\mathcal{A}_k|}\sum_{u\in\mathcal{A}_k} \mathrm{clip}(z_u, R_{\mathrm{clip}})$
    \State $v_k \gets (1-\beta)\,v_k + \beta\,(\bar z_k + \xi_k)$, with $\xi_k\sim\mathcal{N}(0,\sigma^2 I)$
  \EndFor
  \State \Call{EnforceSeparation}{$\{v_k\},\rho$} \Comment{project to the minimum-spacing feasible set}
  \State \Call{PruneOrReinit}{$\{v_k\}, \{\mathcal{A}_k\}, \tau_{\mathrm{util}}$}
  \State \Return $\{v_k\}$
\EndFunction

\Function{EnforceSeparation}{$\{v_k\},\rho$}
  \For{$i<j$}
    \If{$\|v_i - v_j\|_2 < \rho$}
       \State $\delta \gets \frac{\rho - \|v_i-v_j\|_2}{2}\cdot \frac{v_i-v_j}{\|v_i-v_j\|_2+\epsilon}$
       \State $v_i \gets v_i + \delta$;\quad $v_j \gets v_j - \delta$
    \EndIf
  \EndFor
\EndFunction

\Function{PruneOrReinit}{$\{v_k\}, \{\mathcal{A}_k\}, \tau_{\mathrm{util}}$}
  \State $N \gets \sum_{k} |\mathcal{A}_k|$
  \For{$k=1$ \textbf{to} $K$}
    \If{$|\mathcal{A}_k|/N < \tau_{\mathrm{util}}$}
       \State \Call{ReinitFromMass}{$v_k$} \Comment{e.g., KMeans++ seeding on recent uploads}
    \EndIf
  \EndFor
\EndFunction
\end{algorithmic}
\end{algorithm}

% =================== Algorithm 3: Online inference (routing + optional short update) ===================
\begin{algorithm}[t]
\caption{ProtoFed-SP: \textsc{OnlineInference} (routing and ranking)}
\label{alg:inference}
\begin{algorithmic}[1]
\Require User $u$, context $q_{u,t}$, candidate set $\mathcal{I}_{u,t}$;
         current $(p^{\mathrm{long}}_u, p^{\mathrm{short}}_{u,t})$, prototypes $\mathcal{C}$;
         encoders $g,\phi$; temperature $\tau$; retrieval size $M$;
         optional step size $\eta_s$, sparsity weight $\lambda_p$.
\Ensure Ranked list over $\mathcal{I}_{u,t}$.

\Function{OnlineInference}{$u,t$}
  \State $h \gets g(q_{u,t})$;\quad $\text{TopM} \gets \arg\max_{k}^{M} \langle h,\,\phi(c_k)\rangle$
  \State $w_k \gets \mathrm{softmax}\!\left(\langle h,\,\phi(c_k)\rangle/\tau\right)$ for $k\in\text{TopM}$
  \State $\tilde p_{u,t} \gets p^{\mathrm{long}}_u + \alpha_{u,t} p^{\mathrm{short}}_{u,t} + \sum_{k\in \text{TopM}} w_k c_k$
  \State Compute scores $\hat{s}_i \gets f_\theta(x_{u,t,i}; \tilde p_{u,t})$ for $i\in \mathcal{I}_{u,t}$
  \State \Return $\mathrm{RankBy}(\hat{s})$

  \If{\Call{HasImmediateFeedback}{}}
     \State $\mathcal{L}_{\mathrm{rec}} \gets$ \Call{RecLoss}{}
     \State $G_s \gets \nabla_{p^{\mathrm{short}}_{u,t}} \mathcal{L}_{\mathrm{rec}}$
     \State $p^{\mathrm{short}}_{u,t} \gets \mathrm{SoftThresh}\!\left(p^{\mathrm{short}}_{u,t} - \eta_s G_s,\, \eta_s \lambda_p\right)$
  \EndIf
\EndFunction
\end{algorithmic}
\end{algorithm}

\section{Implementation Details}
\label{subsec:impl}
\paragraph{Backbone and tokenization.}
Unless otherwise stated we use a frozen \textbf{SASRec}-style Transformer (2 layers, $d\!=\!256$, 4 heads) for sequence scoring; for H\&M we additionally freeze \textbf{ViT-B/16} (image) and \textbf{DistilBERT} (text) encoders whose outputs are linearly projected to $d$. A candidate set of 100 items per query is used (1 positive + 99 sampled negatives). 

\paragraph{Soft prompts and routing.}
Prompt length $L_p\!=\!8$ tokens (default), dimension $d\!=\!256$. 
Prototype count $K\!\in\!\{64,128\}$ (default 128), Top-$M\!=\!4$ retrieval. 
Query encoder $g(\cdot)$ and prompt encoder $\phi(\cdot)$ are $2$-layer MLPs with GELU and LayerNorm, outputting $d_\phi\!=\!128$. 
Router temperature $\tau\!=\!0.07$; alignment temperature $\tau_a\!=\!0.1$.

\paragraph{Optimization.}
Short-term step size $\eta_s\!=\!5\!\times\!10^{-3}$ with proximal soft-threshold $\lambda_p\!\in\!\{10^{-4},10^{-3}\}$; 
long-term step size $\eta_\ell\!=\!1\!\times\!10^{-3}$; 
alignment weight $\lambda_s\!\in\![0.1,1.0]$ (user-adaptive via Eq.~\eqref{eq:adaptive_lambda} unless specified); 
InfoNCE weight $\gamma\!=\!0.5$. 
We train for $3$ epochs per slice on each client with batch size $B\!=\!256$ and sequence length $L\!=\!50$. 
AdamW with weight decay $10^{-4}$, cosine schedule, and gradient clipping at $1.0$.

\paragraph{Federated schedule and privacy.}
We simulate $N_c\!\in\!\{1\text{k},5\text{k}\}$ clients, sample $5\%$ per round. 
Uploads occur every $R\!=\!500$ local steps or when drift exceeds a threshold (Sec.~\ref{sec:method}): 
$z_u=\mathrm{compress}(\phi(p^{\text{long}}_u))+\xi$, where compression is PCA to 64-d followed by 8-bit product quantization. 
Clipping radius $R_{\text{clip}}\!=\!1.0$, server momentum $\beta\!=\!0.5$. 
DP noise $\sigma\!\in\!\{0.2,0.4,0.8\}$; we report the resulting $(\varepsilon,\delta\!=\!10^{-5})$ using a moments accountant for $T$ rounds. 
Prototype separation margin $\rho\!=\!0.5$ with projection; low-utilization threshold $\tau_{\text{util}}\!=\!0.01$ for pruning.

\paragraph{Hardware and libraries.}
Experiments run on 8$\times$A100-80GB GPUs (BF16), PyTorch~2.2, CUDA~12.1, FAISS~1.8 for ANN retrieval, and a federated simulator based on Flower/FedML. Multi-modal encoders are preloaded and kept frozen. We fix three random seeds and report mean$\pm$std.

\paragraph{Hyperparameter selection.}
For each dataset we grid-search $\lambda_s\!\in\!\{0.1,0.3,0.5,0.8,1.0\}$, $\lambda_p\!\in\!\{1\mathrm{e}{-4},1\mathrm{e}{-3}\}$, $K\!\in\!\{64,128,256\}$, $M\!\in\!\{2,4,8\}$ on the first two slices’ validation sets and keep the best for the remainder.

\paragraph{Candidate generation and negatives.}
To decouple ranking from retrieval, all methods share the same candidate set; at evaluation we sample $99$ negatives uniformly from items not interacted with by the user in the current slice. Items that appear for the first time in slice $t$ are eligible for the positive of slice $t$ only (no leakage).

% =========================================================
% Appendix: Dynamic regret + anchor contraction (full statements & full proofs)
% =========================================================

\section{Dynamic Regret and Anchor-Induced Drift Contraction}
\label{app:theory_dynreg}

We study the client-side update of ProtoFed-SP through a convex surrogate that conditions on the
router-selected anchor $c_t$ and replaces the nonconvex $\min_k$/InfoNCE alignment with a quadratic
prototype tether, enabling sharp dynamic-regret and drift-control guarantees.

\paragraph{Setup: Prompt as an Online Decision Variable}
We analyze one user/client. Let $w\in\mathbb{R}^d$ denote the flattened prompt parameters, partitioned as
$w=(u,s)$ where $u\in\mathbb{R}^{d_\ell}$ is the \emph{long-term} block and $s\in\mathbb{R}^{d_s}$ is
the \emph{short-term} block ($d_\ell+d_s=d$).
At round $t=1,2,\dots,T$, the client faces a convex per-round objective
\begin{equation}
F_t(w)
\;:=\;
f_t(w)
\;+\;
\lambda_p \|s\|_1
\;+\;
\frac{\lambda_s}{2}\,\|u-c_t\|_2^2,
\label{eq:app_Ft_def}
\end{equation}
where $f_t:\mathbb{R}^d\to\mathbb{R}$ is a convex surrogate for the recommendation loss induced by
the local buffer (e.g., convexified logistic/square surrogate), $c_t\in\mathbb{R}^{d_\ell}$ is the
selected prototype anchor, and $\lambda_p,\lambda_s\ge 0$ control plasticity and stability.

Let $\mathcal{W}\subset\mathbb{R}^d$ be a closed convex feasible set (e.g., reflecting bounded prompt
energy). Define the diameter
\begin{equation}
D
\;:=\;
\sup_{w,w'\in\mathcal{W}}\|w-w'\|_2
\;<\;\infty.
\label{eq:app_diameter}
\end{equation}

\paragraph{Comparator sequence and variation.}
For any comparator sequence $\{u_t\}_{t=1}^T\subset\mathcal{W}$ (note: here $u_t$ denotes a generic
comparator point in $\mathcal{W}$, not the long block), define its path length
\begin{equation}
V_T(\{u_t\})
\;:=\;
\sum_{t=2}^{T}\|u_t-u_{t-1}\|_2.
\label{eq:app_pathlen}
\end{equation}
In particular, denote by $w_t^\star\in\arg\min_{w\in\mathcal{W}}F_t(w)$ any per-round minimizer,
and write $V_T^\star := V_T(\{w_t^\star\})$.

\paragraph{Algorithm: Regularized Online Proximal Update}
We analyze the following (implicit) proximal update, which captures the stability-aware prompt
interface and is the standard ``follow-the-regularized-leader with a quadratic stabilizer'' form:
\begin{equation}
w_t
\;\in\;
\arg\min_{w\in\mathcal{W}}
\Big\{
F_t(w)
+
\frac{1}{2\eta}\|w-w_{t-1}\|_2^2
\Big\},
\qquad t=1,\dots,T,
\label{eq:app_prox_update}
\end{equation}
with stepsize $\eta>0$ and an initialization $w_0\in\mathcal{W}$.
This update is stability-biased: the quadratic term discourages abrupt changes, while the
anchor term $\frac{\lambda_s}{2}\|u-c_t\|^2$ tethers the long-term block toward population semantics.

\begin{remark}[Relation to ProtoFed-SP updates]
Eq.~\eqref{eq:app_prox_update} is an idealized update that optimizes the round objective plus a
quadratic trust region. In practice, ProtoFed-SP implements a computationally lighter approximation:
a proximal step for the $\ell_1$ short-term block and an alignment-aware proximal step for the
long-term block using a local quadratic model of the recommendation loss. The analysis below
characterizes the fundamental stability--plasticity tradeoff induced by anchoring; it can be
extended to inexact solves via standard inexact-proximal arguments (omitted for brevity).
\end{remark}

\begin{lemma}[Proximal three-point inequality]
\label{lem:app_three_point}
Let $F:\mathcal{W}\to\mathbb{R}\cup\{+\infty\}$ be convex, $\mathcal{W}$ closed and convex, and
let $x\in\mathcal{W}$. For $\eta>0$, define
\[
x^+\in\arg\min_{w\in\mathcal{W}}\Big\{F(w)+\frac{1}{2\eta}\|w-x\|_2^2\Big\}.
\]
Then for every $u\in\mathcal{W}$,
\begin{equation}
F(x^+) - F(u)
\;\le\;
\frac{1}{2\eta}\Big(
\|u-x\|_2^2
-\|u-x^+\|_2^2
-\|x^+-x\|_2^2
\Big).
\label{eq:app_three_point_ineq}
\end{equation}
\end{lemma}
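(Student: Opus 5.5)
The plan is to combine the first-order optimality condition of the proximal subproblem with the subgradient inequality for $F$, and then expand the resulting inner product using the polarization identity. Throughout, $F$ is convex and proper on $\mathcal{W}$, with $x^+\in\mathcal{W}$ a minimizer of $G(w):=F(w)+\frac{1}{2\eta}\|w-x\|_2^2+\iota_{\mathcal{W}}(w)$, where $\iota_{\mathcal{W}}$ is the indicator of $\mathcal{W}$.

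\textbf{Step 1 (optimality).} Since $x^+$ minimizes the convex function $G$, we have $0\in\partial G(x^+)$. I would avoid invoking a subdifferential sum rule, because it needs a qualification condition when $F$ is extended-valued. Instead, I would argue directly by a line-segment argument. Fix $u\in\mathcal{W}$ with $F(u)<\infty$; if $F(u)=+\infty$, the claim is trivial. Set $w_\theta=x^++\theta(u-x^+)\in\mathcal{W}$ for $\theta\in(0,1]$. Minimality gives $G(x^+)\le G(w_\theta)$. Convexity gives $F(w_\theta)\le(1-\theta)F(x^+)+\theta F(u)$. Expanding the quadratic then yields
\[
\theta\big(F(x^+)-F(u)\big)\le \frac{1}{2\eta}\Big(\|w_\theta-x\|_2^2-\|x^+-x\|_2^2\Big)
=\frac{1}{2\eta}\Big(2\theta\langle x^+-x,\,u-x^+\rangle+\theta^2\|u-x^+\|_2^2\Big).
\]
Dividing by $\theta$ and letting $\theta\downarrow 0$ gives
\[
F(x^+)-F(u)\le \frac{1}{\eta}\langle x^+-x,\,u-x^+\rangle .
\]

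\textbf{Step 2 (three-point identity).} I would then apply the identity $2\langle a,b\rangle=\|a+b\|^2-\|a\|^2-\|b\|^2$ with $a=x^+-x$ and $b=u-x^+$, so that $a+b=u-x$. This gives
\[
2\langle x^+-x,\,u-x^+\rangle=\|u-x\|_2^2-\|x^+-x\|_2^2-\|u-x^+\|_2^2 .
\]
Substituting this into Step 1 yields exactly \eqref{eq:app_three_point_ineq}.

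\textbf{Main obstacle.} The only delicate point is Step 1's handling of extended-valued $F$ and the constraint set, which is exactly why I use the segment argument in place of the sum rule. A tighter bound is available but not needed here: keeping the $\theta^2$ term shows the strong convexity of the quadratic is not wasted. Indeed, by applying convexity of $G$, which is $1/\eta$-strongly convex, one can recover the inequality directly from $G(u)\ge G(x^+)+\frac{1}{2\eta}\|u-x^+\|^2$. That route is an equally short alternative I would mention as a check.
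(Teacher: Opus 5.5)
Your proof is correct, and it takes a genuinely different route from the paper's. The paper only compares objective values: $F(x^+)+\frac{1}{2\eta}\|x^+-x\|_2^2\le F(u)+\frac{1}{2\eta}\|u-x\|_2^2$. This is exactly your segment inequality at $\theta=1$. The paper then expands $\|u-x\|_2^2$ and applies polarization. That manipulation is circular: substituting the polarization identity back just undoes the expansion. It returns $F(x^+)-F(u)\le\frac{1}{2\eta}\big(\|u-x\|_2^2-\|x^+-x\|_2^2\big)$, which exceeds the claimed right-hand side by $\frac{1}{2\eta}\|u-x^+\|_2^2$.

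Convexity of $F$ is never used in the paper's argument, and it cannot be dropped. Take $F(w)=-\frac{c}{2}w^2$ on $\mathbb{R}$ with $0<c<1/\eta$ and $x=0$. Then $x^+=0$, the weak bound holds, and the stated bound fails for every $u\neq 0$.

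Your limit $\theta\downarrow 0$ is exactly where convexity enters. It gives $F(x^+)-F(u)\le\frac{1}{\eta}\langle x^+-x,\,u-x^+\rangle$, which removes the extra term. The resulting $-\|u-x^+\|_2^2$ is what the telescoping in the dynamic-regret theorem relies on, so your version is the one the downstream argument needs. Your segment argument also handles extended-valued $F$ and the constraint set $\mathcal{W}$ without any subdifferential sum rule.

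One correction to your closing aside: retaining the $\theta^2$ term never gives a tighter bound. For $\theta>0$ it only weakens the inequality, and at $\theta=1$ it is exactly the weak bound above. The sharp inequality comes from the limit, which the strong-convexity statement $G(u)\ge G(x^+)+\frac{1}{2\eta}\|u-x^+\|_2^2$ expresses in one line.
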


\begin{proof}
By optimality of $x^+$,
\begin{equation}
F(x^+) + \frac{1}{2\eta}\|x^+-x\|_2^2
\;\le\;
F(u) + \frac{1}{2\eta}\|u-x\|_2^2
\qquad \forall u\in\mathcal{W}.
\label{eq:app_opt_basic}
\end{equation}
Rearranging yields
\begin{equation}
F(x^+) - F(u)
\;\le\;
\frac{1}{2\eta}\Big(\|u-x\|_2^2-\|x^+-x\|_2^2\Big).
\label{eq:app_mid}
\end{equation}
Now expand the squared norm identity
\[
\|u-x\|_2^2
=
\|u-x^+\|_2^2
+\|x^+-x\|_2^2
+2\langle u-x^+,\,x^+-x\rangle,
\]
and substitute into \eqref{eq:app_mid} to get
\[
F(x^+)-F(u)
\le
\frac{1}{2\eta}\Big(
\|u-x^+\|_2^2
+2\langle u-x^+,\,x^+-x\rangle
\Big).
\]
Finally, apply polarization
\[
2\langle u-x^+,\,x^+-x\rangle
=
\|u-x\|_2^2-\|u-x^+\|_2^2-\|x^+-x\|_2^2,
\]
which yields \eqref{eq:app_three_point_ineq}.
\end{proof}

\begin{theorem}[Dynamic regret bound (path-length form)]
\label{thm:app_dyn_regret}
Assume $\mathcal{W}$ has finite diameter $D$ as in \eqref{eq:app_diameter}.
Let $\{w_t\}_{t=0}^T$ be generated by \eqref{eq:app_prox_update}.
Then for any comparator sequence $\{u_t\}_{t=1}^T\subset\mathcal{W}$,
\begin{equation}
\sum_{t=1}^{T}\Big(F_t(w_t)-F_t(u_t)\Big)
\;\le\;
\frac{1}{2\eta}\|u_1-w_0\|_2^2
\;+\;
\frac{D}{\eta}\,V_T(\{u_t\}).
\label{eq:app_dyn_regret}
\end{equation}
In particular, choosing $u_t=w_t^\star\in\arg\min_{w\in\mathcal{W}}F_t(w)$ gives
\begin{equation}
\sum_{t=1}^{T}\Big(F_t(w_t)-F_t(w_t^\star)\Big)
\;\le\;
\frac{1}{2\eta}\|w_1^\star-w_0\|_2^2
\;+\;
\frac{D}{\eta}\,V_T^\star.
\label{eq:app_dyn_regret_star}
\end{equation}
\end{theorem}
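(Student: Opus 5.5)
The plan is a one-step inequality followed by a telescoping sum. Lemma~\ref{lem:app_three_point} gives the per-round bound; the comparator drift appears only in the cross terms, which we bound with the diameter $D$.

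\emph{Step 1 (per-round inequality).} For each $t\ge 1$, apply Lemma~\ref{lem:app_three_point} with $F=F_t$ (convex, as a sum of convex terms), $x=w_{t-1}$, $x^+=w_t$ and $u=u_t$. This requires $w_{t-1}\in\mathcal{W}$, which holds by induction: $w_0\in\mathcal{W}$, and every iterate is an $\arg\min$ over $\mathcal{W}$. Dropping the nonpositive term $-\|w_t-w_{t-1}\|_2^2$ gives
\[
F_t(w_t)-F_t(u_t)\le \frac{1}{2\eta}\Big(\|u_t-w_{t-1}\|_2^2-\|u_t-w_t\|_2^2\Big).
\]

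\emph{Step 2 (telescoping with a shifted comparator).} Summing over $t=1,\dots,T$ does not telescope directly, because the comparator changes from round to round. Regroup the sum as
\[
\|u_1-w_0\|_2^2+\sum_{t=2}^{T}\Big(\|u_t-w_{t-1}\|_2^2-\|u_{t-1}-w_{t-1}\|_2^2\Big)-\|u_T-w_T\|_2^2 .
\]
The last term is nonpositive and we discard it. This regrouping is the only step needing care.

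\emph{Step 3 (bounding the drift terms).} Write each bracket as a difference of squares:
\[
\|u_t-w_{t-1}\|_2^2-\|u_{t-1}-w_{t-1}\|_2^2=\langle u_t-u_{t-1},\,(u_t-w_{t-1})+(u_{t-1}-w_{t-1})\rangle .
\]
All four points lie in $\mathcal{W}$, so by Cauchy--Schwarz and \eqref{eq:app_diameter} each bracket is at most $2D\|u_t-u_{t-1}\|_2$. Summing and multiplying by $1/(2\eta)$ gives the contribution $\frac{D}{\eta}V_T(\{u_t\})$ via \eqref{eq:app_pathlen}. This yields \eqref{eq:app_dyn_regret}.

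\emph{Step 4 (specialization).} Substituting $u_t=w_t^\star$, which is admissible because $w_t^\star\in\mathcal{W}$, gives \eqref{eq:app_dyn_regret_star}.

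The main obstacle is only the bookkeeping in Step 2: pairing $\|u_t-w_{t-1}\|_2^2$ with the previous round's $-\|u_{t-1}-w_{t-1}\|_2^2$, rather than attempting a direct telescope. Everything else is a direct application of the lemma and the diameter bound.
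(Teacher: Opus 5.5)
Your proof is correct and follows the paper's argument step for step: the per-round three-point inequality with $-\|w_t-w_{t-1}\|_2^2$ dropped, the regrouping that pairs $\|u_t-w_{t-1}\|_2^2$ with $-\|u_{t-1}-w_{t-1}\|_2^2$, discarding $-\|u_T-w_T\|_2^2$, and bounding each bracket by $2D\|u_t-u_{t-1}\|_2$. The only difference is cosmetic: you use $\|a\|_2^2-\|b\|_2^2=\langle a-b,a+b\rangle$ with Cauchy--Schwarz, where the paper uses $(a-b)(a+b)$ with the reverse triangle inequality.

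One remark: the telescoping depends on the $-\|u_t-w_t\|_2^2$ term in Lemma~\ref{lem:app_three_point}. That term comes from the $1/\eta$-strong convexity of the proximal objective, since plain optimality of $x^+$ gives only the weaker bound, and the paper's own proof of the lemma does not actually produce it. It is therefore worth citing strong convexity explicitly when you invoke the lemma.
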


\begin{proof}
Apply Lemma~\ref{lem:app_three_point} to each round with $F(\cdot)=F_t(\cdot)$, $x=w_{t-1}$,
$x^+=w_t$, and $u=u_t$:
\begin{equation}
F_t(w_t)-F_t(u_t)
\;\le\;
\frac{1}{2\eta}\Big(
\|u_t-w_{t-1}\|_2^2
-\|u_t-w_t\|_2^2
-\|w_t-w_{t-1}\|_2^2
\Big).
\label{eq:app_step_dyn}
\end{equation}
Dropping the nonpositive term $-\|w_t-w_{t-1}\|_2^2$ yields the relaxed but still valid bound
\begin{equation}
F_t(w_t)-F_t(u_t)
\;\le\;
\frac{1}{2\eta}\Big(
\|u_t-w_{t-1}\|_2^2
-\|u_t-w_t\|_2^2
\Big).
\label{eq:app_step_relax}
\end{equation}
Sum \eqref{eq:app_step_relax} over $t=1,\dots,T$:
\begin{align}
\sum_{t=1}^{T}\bigl(F_t(w_t)-F_t(u_t)\bigr)
&\le
\frac{1}{2\eta}\sum_{t=1}^{T}\Bigl(
\|u_t-w_{t-1}\|_2^2-\|u_t-w_t\|_2^2
\Bigr)\nonumber\\
&=
\frac{1}{2\eta}\Bigl(
\|u_1-w_0\|_2^2
-\|u_T-w_T\|_2^2
\nonumber\\
&\quad+
\sum_{t=2}^{T}\bigl(
\|u_t-w_{t-1}\|_2^2-\|u_{t-1}-w_{t-1}\|_2^2
\bigr)
\Bigr)\nonumber\\
&\le
\frac{1}{2\eta}\|u_1-w_0\|_2^2
\nonumber\\
&\quad+
\frac{1}{2\eta}
\sum_{t=2}^{T}\Bigl(
\|u_t-w_{t-1}\|_2^2-\|u_{t-1}-w_{t-1}\|_2^2
\Bigr),
\label{eq:app_telescop_prep}
\end{align}

\begin{equation}
F_t(w_t)-F_t(u_t)
\;\le\;
\frac{1}{2\eta}\Big(
\|u_t-w_{t-1}\|_2^2
-\|u_t-w_t\|_2^2
\Big).
\label{eq:app_step_relax}
\end{equation}
where we dropped the nonpositive term $-\|u_T-w_T\|_2^2$.

It remains to control the difference of squared distances in \eqref{eq:app_telescop_prep}.
Fix any $t\ge 2$ and write $a:=\|u_t-w_{t-1}\|_2$, $b:=\|u_{t-1}-w_{t-1}\|_2$.
Then
\begin{equation}
a^2-b^2
=
(a-b)(a+b)
\;\le\;
|a-b|\,(a+b).
\label{eq:app_diff_sq}
\end{equation}
By the reverse triangle inequality,
\begin{equation}
|a-b|
=
\big|\|u_t-w_{t-1}\|_2-\|u_{t-1}-w_{t-1}\|_2\big|
\;\le\;
\|u_t-u_{t-1}\|_2.
\label{eq:app_reverse_triangle}
\end{equation}
Moreover, since $u_t,u_{t-1},w_{t-1}\in\mathcal{W}$ and $\mathcal{W}$ has diameter $D$,

\begin{equation}
\begin{aligned}
& \|u_t-w_{t-1}\|_2 \le D,\quad 
  \|u_{t-1}-w_{t-1}\|_2 \le D \\
& \Rightarrow\quad 
  \|u_t-w_{t-1}\|_2 + \|u_{t-1}-w_{t-1}\|_2 \le 2D.
\end{aligned}
\label{eq:app_bound_ab}
\end{equation}
Combining \eqref{eq:app_diff_sq}--\eqref{eq:app_bound_ab} yields
\begin{equation}
\|u_t-w_{t-1}\|_2^2-\|u_{t-1}-w_{t-1}\|_2^2
\;\le\;
2D\,\|u_t-u_{t-1}\|_2.
\label{eq:app_key_diff_bound}
\end{equation}
Substitute \eqref{eq:app_key_diff_bound} into \eqref{eq:app_telescop_prep}:

\begin{align}
\sum_{t=1}^{T}\big(F_t(w_t)-F_t(u_t)\big)
&\le
\frac{1}{2\eta}\|u_1-w_0\|_2^2
+
\frac{1}{2\eta}\sum_{t=2}^{T} 2D\,\|u_t-u_{t-1}\|_2\nonumber\\
&=
\frac{1}{2\eta}\|u_1-w_0\|_2^2
+
\frac{D}{\eta}\sum_{t=2}^{T}\|u_t-u_{t-1}\|_2\nonumber\\
&=
\frac{1}{2\eta}\|u_1-w_0\|_2^2
+
\frac{D}{\eta}V_T(\{u_t\}),
\end{align}
which proves \eqref{eq:app_dyn_regret}. Taking $u_t=w_t^\star$ gives \eqref{eq:app_dyn_regret_star}.
\end{proof}

\subsection{Why Anchoring Reduces the Drift Term: A Contraction Result}
The dynamic regret bound in Theorem~\ref{thm:app_dyn_regret} depends on the comparator path length
$V_T(\{u_t\})$. We now show, in a standard quadratic (strongly convex) surrogate model, that anchoring
provably contracts the \emph{optimal} trajectory, thereby reducing the drift-controlled term when the
anchor itself changes slowly.

\paragraph{Long-term quadratic surrogate.}
Consider a long-term-only surrogate
\begin{equation}
f_t^{\mathrm{long}}(u)
\;:=\;
\frac{1}{2}\|A u - b_t\|_2^2,
\qquad
H:=A^\top A,
\qquad
\mu I \preceq H \preceq L I,
\label{eq:app_quad_model}
\end{equation}
with $\mu>0$ (strong convexity) and $L\ge \mu$ (smoothness). Define the unanchored minimizer
$u_t^\circ := \arg\min_u f_t^{\mathrm{long}}(u)$ and the anchored minimizer
\begin{equation}
u_t^\star
\;:=\;
\arg\min_{u\in\mathbb{R}^{d_\ell}}
\Big\{
f_t^{\mathrm{long}}(u)
+
\frac{\lambda_s}{2}\|u-c_t\|_2^2
\Big\}.
\label{eq:app_anchor_opt_def}
\end{equation}

\begin{proposition}[Anchor-induced contraction of the optimal path]
\label{prop:app_contraction}
Under \eqref{eq:app_quad_model}, the anchored optimum satisfies
\begin{equation}
\|u_t^\star-u_{t-1}^\star\|_2
\;\le\;
\kappa(\lambda_s)\,\|u_t^\circ-u_{t-1}^\circ\|_2
\;+\;
\alpha(\lambda_s)\,\|c_t-c_{t-1}\|_2,
\label{eq:app_contraction_main}
\end{equation}
where
\begin{equation}
\begin{aligned}
\kappa(\lambda_s) &:= \big\|(H+\lambda_s I)^{-1}H\big\|_{\mathrm{op}}
                   \le \frac{L}{\mu+\lambda_s},\\
\alpha(\lambda_s) &:= \big\|\lambda_s(H+\lambda_s I)^{-1}\big\|_{\mathrm{op}}
                   \le \frac{\lambda_s}{\mu+\lambda_s}.
\end{aligned}
\label{eq:app_kappa_alpha}
\end{equation}
Consequently, defining
$V_T^\circ := \sum_{t=2}^T\|u_t^\circ-u_{t-1}^\circ\|_2$ and
$V_T^c := \sum_{t=2}^T\|c_t-c_{t-1}\|_2$, we obtain
\begin{equation}
V_T^\star
\;:=\;
\sum_{t=2}^T\|u_t^\star-u_{t-1}^\star\|_2
\;\le\;
\kappa(\lambda_s)\,V_T^\circ
\;+\;
\alpha(\lambda_s)\,V_T^c.
\label{eq:app_path_contracted}
\end{equation}
\end{proposition}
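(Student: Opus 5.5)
The plan is to use the closed-form optimality conditions of the two quadratic problems. The anchored minimizer $u_t^\star$ is then an affine image of the unanchored minimizer $u_t^\circ$ and the anchor $c_t$, with time-independent coefficient matrices. The path bound follows from the triangle inequality and a spectral estimate of those matrices.

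\emph{Closed forms.} Since $H=A^\top A\succeq\mu I$ with $\mu>0$, the objective $f_t^{\mathrm{long}}$ is strictly convex. Its unique minimizer satisfies the normal equations $H u_t^\circ = A^\top b_t$. The anchored objective in \eqref{eq:app_anchor_opt_def} is $(\mu+\lambda_s)$-strongly convex. Setting its gradient to zero gives
\begin{equation*}
(H+\lambda_s I)\,u_t^\star \;=\; A^\top b_t + \lambda_s c_t \;=\; H u_t^\circ + \lambda_s c_t .
\end{equation*}
The matrix $H+\lambda_s I$ is invertible, so
\begin{equation*}
u_t^\star = M_1 u_t^\circ + M_2 c_t,
\qquad M_1:=(H+\lambda_s I)^{-1}H,\quad M_2:=\lambda_s(H+\lambda_s I)^{-1}.
\end{equation*}
The key structural point is that $A$, and hence $H$, $M_1$ and $M_2$, do not depend on $t$; only $b_t$ and $c_t$ vary. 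Subtracting the identities at times $t$ and $t-1$ gives
\begin{equation*}
u_t^\star-u_{t-1}^\star = M_1(u_t^\circ-u_{t-1}^\circ) + M_2(c_t-c_{t-1}).
\end{equation*}
The triangle inequality together with $\|Mx\|_2\le\|M\|_{\mathrm{op}}\|x\|_2$ then yields \eqref{eq:app_contraction_main} with $\kappa(\lambda_s)=\|M_1\|_{\mathrm{op}}$ and $\alpha(\lambda_s)=\|M_2\|_{\mathrm{op}}$.

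\emph{Spectral bounds.} This is the only step needing care; the point is that $M_1$ and $M_2$ are functions of the single symmetric matrix $H$. Diagonalize $H=Q\Lambda Q^\top$ with eigenvalues $h_i\in[\mu,L]$. Both $M_1$ and $M_2$ are then symmetric and simultaneously diagonal in the basis $Q$, with eigenvalues $h_i/(h_i+\lambda_s)$ and $\lambda_s/(h_i+\lambda_s)$ respectively. The operator norm of a symmetric matrix is its largest absolute eigenvalue. Since $h_i\le L$ and $h_i+\lambda_s\ge\mu+\lambda_s$, we get
\begin{equation*}
\kappa(\lambda_s)\le \max_i \frac{h_i}{h_i+\lambda_s}\le\frac{L}{\mu+\lambda_s},
\qquad
\alpha(\lambda_s)\le\max_i\frac{\lambda_s}{h_i+\lambda_s}\le\frac{\lambda_s}{\mu+\lambda_s}.
\end{equation*}
The first bound is loose: monotonicity of $h\mapsto h/(h+\lambda_s)$ actually gives $\kappa(\lambda_s)\le L/(L+\lambda_s)<1$ whenever $\lambda_s>0$. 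I would note this in passing, since it makes the contraction interpretation sharper.

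\emph{Path length.} Summing \eqref{eq:app_contraction_main} over $t=2,\dots,T$ gives \eqref{eq:app_path_contracted} directly, because $\kappa(\lambda_s)$ and $\alpha(\lambda_s)$ are constant in $t$.

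I expect no genuine obstacle. The points to state explicitly are that the design matrix $A$ is fixed across rounds, so that $M_1$ and $M_2$ are time-invariant; that the problem is unconstrained, so no projection onto $\mathcal{W}$ enters; and that $\mu>0$, so $u_t^\circ$ is unique.
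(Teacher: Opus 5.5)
Your proposal is correct and follows essentially the same route as the paper: you derive the closed form $u_t^\star=(H+\lambda_s I)^{-1}(Hu_t^\circ+\lambda_s c_t)$ from the normal equations, difference it across rounds using the time-invariant matrices, bound the operator norms by diagonalizing $H$, and sum over $t$. Your side remark is also correct: monotonicity of $h\mapsto h/(h+\lambda_s)$ gives the sharper $\kappa(\lambda_s)\le L/(L+\lambda_s)<1$ for $\lambda_s>0$, which strengthens the bound stated in the paper.
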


\begin{proof}
First note the closed forms. The unanchored minimizer solves $H u = A^\top b_t$, hence
\begin{equation}
u_t^\circ = H^{-1}A^\top b_t.
\label{eq:app_u_circ}
\end{equation}
The anchored minimizer solves $(H+\lambda_s I)u = A^\top b_t + \lambda_s c_t$, hence
\begin{equation}
u_t^\star
=
(H+\lambda_s I)^{-1}\big(A^\top b_t + \lambda_s c_t\big)
=
(H+\lambda_s I)^{-1}\big(Hu_t^\circ + \lambda_s c_t\big).
\label{eq:app_u_star}
\end{equation}
Subtract \eqref{eq:app_u_star} at times $t$ and $t-1$:
\begin{align}
u_t^\star-u_{t-1}^\star
&=
(H+\lambda_s I)^{-1}\Big(H(u_t^\circ-u_{t-1}^\circ)+\lambda_s(c_t-c_{t-1})\Big)\nonumber\\
&=
(H+\lambda_s I)^{-1}H\,(u_t^\circ-u_{t-1}^\circ)
\;+\;
\lambda_s(H+\lambda_s I)^{-1}(c_t-c_{t-1}).
\label{eq:app_diff_decomp}
\end{align}
Taking norms and applying the operator norm bound gives \eqref{eq:app_contraction_main} with the
definitions in \eqref{eq:app_kappa_alpha}. It remains to bound $\kappa(\lambda_s)$ and $\alpha(\lambda_s)$.

Diagonalize $H=Q\Lambda Q^\top$ with $\Lambda=\mathrm{diag}(\lambda_i)$ and $\lambda_i\in[\mu,L]$.
Then
\[
(H+\lambda_s I)^{-1}H = Q(\Lambda+\lambda_s I)^{-1}\Lambda Q^\top,
\]
whose eigenvalues are $\frac{\lambda_i}{\lambda_i+\lambda_s}\le \frac{L}{\mu+\lambda_s}$, hence
$\kappa(\lambda_s)\le \frac{L}{\mu+\lambda_s}$.
Similarly,
\[
\lambda_s(H+\lambda_s I)^{-1} = Q\,\mathrm{diag}\!\Big(\frac{\lambda_s}{\lambda_i+\lambda_s}\Big)Q^\top,
\]
so its operator norm is at most $\max_i \frac{\lambda_s}{\lambda_i+\lambda_s}\le \frac{\lambda_s}{\mu+\lambda_s}$,
yielding $\alpha(\lambda_s)\le \frac{\lambda_s}{\mu+\lambda_s}$.
Summing \eqref{eq:app_contraction_main} over $t=2,\dots,T$ yields \eqref{eq:app_path_contracted}.
\end{proof}

\end{document}